\documentclass{article} % For LaTeX2e

\usepackage{iclr2018_conference,times}

\usepackage{hyperref}
\usepackage{url}

% camera-ready version
\iclrfinalcopy

\usepackage{amsmath}
\usepackage{amsfonts}
\usepackage{amssymb}
\usepackage{dsfont}

\usepackage{float}
\usepackage{algorithm}
\usepackage{algpseudocode}
\usepackage{graphicx}
\usepackage{subcaption}
\usepackage{dirtytalk}
\usepackage{multirow}
\usepackage{bbm}

\usepackage{url}
\usepackage{pifont}
\usepackage{booktabs}
\usepackage{amsfonts}
\usepackage{nicefrac}
\usepackage{microtype}

\usepackage{stmaryrd}
\usepackage{amsthm}
\makeatletter

\DeclareMathOperator*{\argmax}{argmax}

\newtheorem*{rep@theorem}{\rep@title}
\newcommand{\newreptheorem}[2]{%
\newenvironment{rep#1}[1]{%
 \def\rep@title{#2 \ref{##1}}%
 \begin{rep@theorem}}%
 {\end{rep@theorem}}}
\makeatother

\newreptheorem{theorem}{Theorem}
\newtheorem{definition}{Definition}
\newreptheorem{definition}{Definition}
\newtheorem{lemma}{Lemma}
\newreptheorem{lemma}{Lemma}

\newreptheorem{corollary}{Corollary}
\newtheorem{proposition}{Proposition}
\newreptheorem{proposition}{Proposition}
\newtheorem{observation}{Observation}
\newreptheorem{observation}{Observation}

\newenvironment{ProofSketch}[1][\proofname]{%
  \proof[Sketch of Proof]%
}{\endproof}

\allowdisplaybreaks

\newcommand{\nocontentsline}[3]{}
\newcommand{\tocless}[2]{\bgroup\let\addcontentsline=\nocontentsline#1{#2}\egroup}

\title{Smooth Loss Functions for Deep Top-k \\ Classification}

\author{Leonard Berrada$^1$, Andrew Zisserman$^1$ and M. Pawan Kumar$^{1, 2}$ \\
  $^1$Department of Engineering Science\\
 \hspace{3pt} University of Oxford\\
  $^2$Alan Turing Institute\\
  \texttt{\{lberrada,az,pawan\}@robots.ox.ac.uk} \\
}

\begin{document}

\maketitle

\begin{abstract}
The top-$k$ error is a common measure of performance in machine learning and computer vision. In practice, top-$k$ classification is typically performed with deep neural networks trained with the cross-entropy loss. Theoretical results indeed suggest that cross-entropy is an optimal learning objective for such a task in the limit of infinite data. In the context of limited and noisy data however, the use of a loss function that is specifically designed for top-$k$ classification can bring significant improvements.
Our empirical evidence suggests that the loss function must be smooth and have non-sparse gradients in order to work well with deep neural networks. Consequently, we introduce a family of smoothed loss functions that are suited to top-$k$ optimization via deep learning. The widely used cross-entropy is a special case of our family. Evaluating our smooth loss functions is computationally challenging: a na{\"i}ve algorithm would require $\mathcal{O}(\binom{n}{k})$ operations, where $n$ is the number of classes. Thanks to a connection to polynomial algebra and a divide-and-conquer approach, we provide an algorithm with a time complexity of $\mathcal{O}(k n)$. Furthermore, we present a novel approximation to obtain fast and stable algorithms on GPUs with single floating point precision. We compare the performance of the cross-entropy loss and our margin-based losses in various regimes of noise and data size, for the predominant use case of $k=5$. Our investigation reveals that our loss is more robust to noise and overfitting than cross-entropy.
\end{abstract}

\tocless\section{Introduction}

In machine learning many classification tasks present inherent label confusion. The confusion can originate from a variety of factors, such as incorrect labeling, incomplete annotation, or some fundamental ambiguities that obfuscate the ground truth label even to a human expert. For example, consider the images from the ImageNet data set \citep{Russakovsky2015} in Figure \ref{fig:imagenet_examples}, which illustrate the aforementioned factors. To mitigate these issues, one may require the model to predict the $k$ most likely labels, where $k$ is typically very small compared to the total number of labels. Then the prediction is considered incorrect if all of its $k$ labels differ from the ground truth, and correct otherwise. This is commonly referred to as the top-$k$ error. Learning such models is a longstanding task in machine learning, and many loss functions for top-$k$ error have been suggested in the literature.

\begin{figure}[h]
\centering
\includegraphics[scale=0.3]{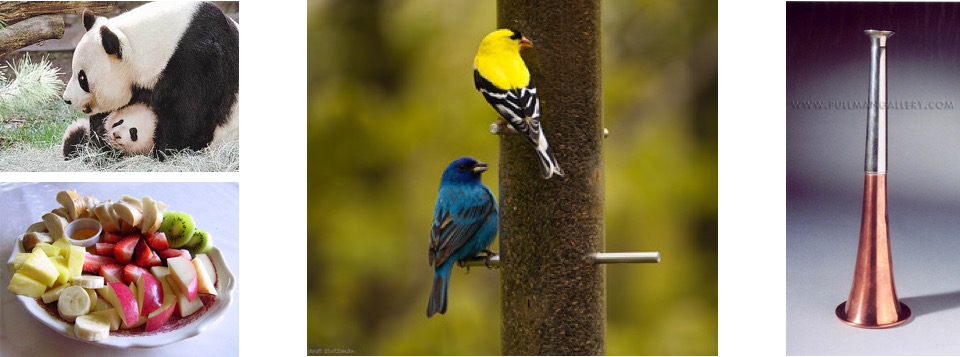}
\caption{\em Examples of images with label confusion, from the validation set of ImageNet. The top-left image is incorrectly labeled as ``red panda'', instead of ``giant panda''. The bottom-left image is labeled as ``strawberry'', although the categories ``apple'', ``banana'' and ``pineapple'' would be other valid labels. The center image is labeled as ``indigo bunting'', which is only valid for the lower bird of the image. The right-most image is labeled as a cocktail shaker, yet could arguably be a part of a music instrument (for example with label ``cornet, horn, trumpet, trump''). Such examples motivate the need to predict more than a single label per image.}
\label{fig:imagenet_examples}
\end{figure}

In the context of correctly labeled large data, deep neural networks trained with cross-entropy have shown exemplary capacity to accurately approximate the data distribution. An illustration of this phenomenon is the performance attained by deep convolutional neural networks on the ImageNet challenge. Specifically, state-of-the-art models trained with cross-entropy yield remarkable success on the top-$5$ error, although cross-entropy is not tailored for top-$5$ error minimization. This phenomenon can be explained by the fact that cross-entropy is top-$k$ calibrated for any $k$ \citep{Lapin2016}, an asymptotic property which is verified in practice in the large data setting. However, in cases where only a limited amount of data is available, learning large models with cross-entropy can be prone to over-fitting on incomplete or noisy labels.

To alleviate the deficiency of cross-entropy, we present a new family of top-$k$ classification loss functions for deep neural networks. Taking inspiration from multi-class SVMs, our loss creates a margin between the correct top-$k$ predictions and the incorrect ones. Our empirical results show that traditional top-$k$ loss functions do not perform well in combination with deep neural networks. We believe that the reason for this is the lack of smoothness and the sparsity of the derivatives that are used in backpropagation. In order to overcome this difficulty, we smooth the loss with a temperature parameter. The evaluation of the smooth function and its gradient is challenging, as smoothing increases the na{\"i}ve time complexity from $\mathcal{O}(n)$ to $\mathcal{O}(\binom{n}{k})$. With a connection to polynomial algebra and a divide-and-conquer method, we present an algorithm with $\mathcal{O}(kn)$ time complexity and training time comparable to cross-entropy in practice. We provide insights for numerical stability of the forward pass. To deal with instabilities of the backward pass, we derive a novel approximation. Our investigation reveals that our top-$k$ loss outperforms cross-entropy in the presence of noisy labels or in the absence of large amounts of data. We further confirm that the difference of performance reduces with large correctly labeled data, which is consistent with known theoretical results.

\tocless\section{Related Work}

\paragraph{Top-$k$ Loss Functions.} The majority of the work on top-$k$ loss functions has been applied to shallow models: \cite{Lapin2016} suggest a convex surrogate on the top-$k$ loss; \cite{Fan2017} select the $k$ largest individual losses in order to be robust to data outliers; \cite{Chang2017} formulate a truncated re-weighted top-$k$ loss as a difference-of-convex objective and optimize it with the Concave-Convex Procedure \citep{Yuille2002}; and \cite{Yan2017} propose to use a combination of top-$k$ classifiers and to fuse their outputs.

Closest to our work is the extensive review of top-$k$ loss functions for computer vision by \cite{Lapin2017}. The authors conduct a study of a number of top-$k$ loss functions derived from cross-entropy and hinge losses. Interestingly, they prove that for any $k$, cross-entropy is top-$k$ calibrated, which is a necessary condition for the classifier to be consistent with regard to the theoretically optimal top-$k$ risk. In other words, cross-entropy satisfies an essential property to perform the optimal top-$k$ classification decision for any $k$ in the limit of infinite data. This may explain why cross-entropy performs well on top-$5$ error on large scale data sets. While thorough, the experiments are conducted on linear models, or pre-trained deep networks that are fine-tuned. For a more complete analysis, we wish to design loss functions that allow for the training of deep neural networks from a random initialization.

\paragraph{Smoothing.} Smoothing is a helpful technique in optimization \citep{Beck2012}. In work closely related to ours, \cite{Lee2001} show that smoothing a binary SVM with a temperature parameter improves the theoretical convergence speed of their algorithm. \cite{Schwing2012} use a temperature parameter to smooth latent variables for structured prediction. \cite{Lapin2017} apply Moreau-Yosida regularization to smooth their top-$k$ surrogate losses.

Smoothing has also been applied in the context of deep neural networks. In particular, \cite{Zheng2015} and \cite{Clevert2016} both suggest modifying the non-smooth ReLU activation to improve the training. \cite{Gulcehre2016} suggest to introduce \say{mollifyers} to smooth the objective function by gradually increasing the difficulty of the optimization problem. \cite{Chaudhari2016} add a local entropy term to the loss to promote solutions with high local entropy. These smoothing techniques are used to speed up the optimization or improve generalization. In this work, we show that smoothing is necessary for the neural network to perform well in combination with our loss function. We hope that this insight can also help the design of losses for tasks other than top-$k$ error minimization.

\tocless\section{Top-k SVM}

\tocless\subsection{Background: Multi-Class SVM}

In order to build an intuition about top-$k$ losses, we start with the simple case of $k=1$, namely multi-class classification, where the output space is defined as $\mathcal{Y} = \{1, ..., n \}$. We suppose that a vector of scores per label $\mathbf{s} \in \mathbb{R}^n$, and a ground truth label $y \in \mathcal{Y}$ are both given. The vector $\mathbf{s}$ is the output of the model we wish to learn, for example a linear model or a deep neural network. The notation $\mathds{1}$ will refer to the indicator function over Boolean statements (1 if true, 0 if false).

\paragraph{Prediction.} The prediction is given by any index with maximal score:
\begin{equation} \label{eq:pred_1}
P(\mathbf{s}) \in \argmax \mathbf{s}.
\end{equation}

\paragraph{Loss.} The classification loss incurs a binary penalty by comparing the prediction to the ground truth label. Plugging in equation (\ref{eq:pred_1}), this can also be written in terms of scores $\mathbf{s}$ as follows:
\begin{equation} \label{eq:loss_1}
\Lambda(\mathbf{s}, y) \triangleq \mathds{1}(y \neq P(\mathbf{s})) = \mathds{1}(\max\limits_{j \in \mathcal{Y}} s_j > s_y).
\end{equation}

\paragraph{Surrogate.} The loss in equation (\ref{eq:loss_1}) is not amenable to optimization, as it is not even continuous in $\mathbf{s}$. To overcome this difficulty, a typical approach in machine learning is to resort to a surrogate loss that provides a continuous upper bound on $\Lambda$. \cite{Crammer2001} suggest the following upper bound on the loss, known as the multi-class SVM loss:
\begin{equation} \label{eq:hard_top1}
l(\mathbf{s}, y) = \max \left\{ \max\limits_{j \in \mathcal{Y} \backslash \{y\}} \left\{ s_j + 1 \right\} - s_y, 0 \right\}.
\end{equation}
In other words, the surrogate loss is zero if the ground truth score is higher than all other scores by a margin of at least one. Otherwise it incurs a penalty which is linear in the difference between the score of the ground truth and the highest score over all other classes.

\paragraph{Rescaling.} Note that the value of 1 as a margin is an arbitrary choice, and can be changed to $\alpha$ for any $\alpha > 0$. This simply entails that we consider the cost $\Lambda$ of a misclassification to be $\alpha$ instead of 1. Moreover, we show in Proposition \ref{prop:margin} of Appendix \ref{app:margin} how the choices of $\alpha$ and of the quadratic regularization hyper-parameter are interchangeable.

\tocless\subsection{Top-k Classification}

We now generalize the above framework to top-$k$ classification, where $k \in \{1, ..., n - 1\}$. We use the following notation: for $p \in \{1, ..., n\}$, $s_{[p]}$ refers to the $p$-th largest element of $\mathbf{s}$, and $\mathbf{s}_{\backslash p}$ to the vector $(s_1, ..., s_{p-1}, s_{p+1}, ..., s_n) \in \mathbb{R}^{n-1}$ (that is, the vector $\mathbf{s}$ with the $p$-th element omitted). The term $\mathcal{Y}^{(k)}$ denotes the set of $k$-tuples with $k$ distinct elements of $\mathcal{Y}$. Note that we use a bold font for a tuple $\bar{\mathbf{y}} \in \mathcal{Y}^{(k)}$ in order to distinguish it from a single label $\bar{y} \in \mathcal{Y}$.

\paragraph{Prediction.} Given the scores $\mathbf{s} \in \mathbb{R}^{n}$, the top-$k$ prediction consists of any set of labels corresponding to the $k$ largest scores:
\begin{equation} \label{eq:pred_topk}
P_k(\mathbf{s}) \in \left\{ \bar{\mathbf{y}} \in \mathcal{Y}^{(k)} : \forall \: i \in \{1, .., k\}, \: s_{\bar{y}_i} \geq s_{[k]} \right\}.
\end{equation}

\paragraph{Loss.} The loss depends on whether $y$ is part of the top-$k$ prediction, which is equivalent to comparing the $k$-largest score with the ground truth score:
\begin{equation}
\Lambda_k(\mathbf{s}, y) \triangleq \mathds{1}(y \notin P_k(\mathbf{s})) = \mathds{1}(s_{[k]} > s_y).
\end{equation}
Again, such a binary loss is not suitable for optimization. Thus we introduce a surrogate loss.

\paragraph{Surrogate.} As pointed out in \cite{lapin2015}, there is a natural extension of the previous multi-class case:
\begin{equation} \label{eq:hard_topk}
\begin{split}
l_k(\mathbf{s}, y) \triangleq
    \max \left\{ \left( \mathbf{s}_{\backslash y} + \mathbf{1} \right)_{[k]} - s_y, 0 \right\}.
\end{split}
\end{equation}
This loss creates a margin between the ground truth and the $k$-th largest score, irrespectively of the values of the $(k-1)$-largest scores. Note that we retrieve the formulation of \cite{Crammer2001} for $k=1$.

\paragraph{Difficulty of the Optimization.} The surrogate loss $l_k$ of equation (\ref{eq:hard_topk}) suffers from two disadvantages that make it difficult to optimize: (i) it is not a smooth function of $\mathbf{s}$ -- it is continuous but not differentiable -- and (ii) its weak derivatives have at most two non-zero elements. Indeed at most two elements of $\mathbf{s}$ are retained by the $(\cdot)_{[k]}$ and $\max$ operators in equation (\ref{eq:hard_topk}). All others are discarded and thus get zero derivatives. When $l_k$ is coupled with a deep neural network, the model typically yields poor performance, even on the training set. Similar difficulties to optimizing a piecewise linear loss have also been reported by \cite{li2017a} in the context of multi-label classification. We illustrate this in the next section.

We postulate that the difficulty of the optimization explains why there has been little work exploring the use of SVM losses in deep learning (even in the case $k=1$), and that this work may help remedy it. We propose a smoothing that alleviates both issues (i) and (ii), and we present experimental evidence that the smooth surrogate loss offers better performance in practice.

\tocless\subsection{Smooth Surrogate Loss}

\paragraph{Reformulation.} We introduce the following notation: given a label $\bar{y} \in \mathcal{Y}$, $\mathcal{Y}^{(k)}_{\bar{y}}$ is the subset of tuples from $\mathcal{Y}^{(k)}$ that include $\bar{y}$ as one of their elements. For $\bar{\mathbf{y}} \in \mathcal{Y}^{(k)}$ and $y \in \mathcal{Y}$, we further define $\Delta_k(\bar{\mathbf{y}}, y) \triangleq \mathds{1}(y \notin \bar{\mathbf{y}})$. Then, by adding and subtracting the $k-1$ largest scores of $\mathbf{s}_{\backslash y}$ as well as $s_y$, we obtain:
\begin{equation} \label{eq:reformulation}
\begin{split}
l_k(\mathbf{s}, y)
    &= \max \left\{ \left( \mathbf{s}_{\backslash y} + \mathbf{1} \right)_{[k]} - s_y, 0 \right\}, \\
    &= \max\limits_{\bar{\mathbf{y}} \in \mathcal{Y}^{(k)}} \left\{ \Delta_k(\bar{\mathbf{y}}, y) + \sum\limits_{j \in \bar{\mathbf{y}}} s_j \right\}
    - \max\limits_{\bar{\mathbf{y}} \in \mathcal{Y}_{y}^{(k)}} \left\{ \sum\limits_{j \in \bar{\mathbf{y}}} s_j \right\}.
\end{split}
\end{equation}

We give a more detailed proof of this in Appendix \ref{app:reformulation}. Since the margin can be rescaled without loss of generality, we rewrite $l_k$ as:
\begin{equation} \label{eq:hard_topk2}
l_k(\mathbf{s}, y)
    = \max\limits_{\bar{\mathbf{y}} \in \mathcal{Y}^{(k)}} \left\{ \Delta_k(\bar{\mathbf{y}}, y) + \frac{1}{k}\sum\limits_{j \in \bar{\mathbf{y}}} s_j \right\}
    - \max\limits_{\bar{\mathbf{y}} \in \mathcal{Y}_{y}^{(k)}} \left\{ \frac{1}{k} \sum\limits_{j \in \bar{\mathbf{y}}} s_j \right\}.
\end{equation}

\paragraph{Smoothing.} In the form of equation (\ref{eq:hard_topk2}), the loss function can be smoothed with a temperature parameter $\tau > 0$:
\begin{equation} \label{eq:smooth_topk}
\begin{split}
L_{k, \tau}(\mathbf{s}, y)
    = \tau \log \bigg[ \sum\limits_{\bar{\mathbf{y}} \in \mathcal{Y}^{(k)}} \exp \bigg( \frac{1}{\tau} \Big(\Delta_k(\bar{\mathbf{y}}, y) + \dfrac{1}{k} \sum\limits_{j \in \bar{\mathbf{y}}} s_j \Big)\bigg) \bigg]
    - \tau \log \bigg[ \sum\limits_{\bar{\mathbf{y}} \in \mathcal{Y}_{y}^{(k)}} \exp \Big( \dfrac{1}{k \tau} \sum\limits_{j \in \bar{\mathbf{y}}} s_j \Big) \bigg].
\end{split}
\end{equation}
Note that we have changed the notation to use $L_{k, \tau}$ to refer to the smooth loss. In what follows, we first outline the properties of $L_{k, \tau}$ and its relationship with cross-entropy. Then we show the empirical advantage of $L_{k, \tau}$ over its non-smooth counter-part $l_k$.

\paragraph{Properties of the Smooth Loss.} The smooth loss $L_{k, \tau}$ has a few interesting properties. First, for any $\tau > 0$, $L_{k, \tau}$ is infinitely differentiable and has non-sparse gradients. Second, under mild conditions, when $\tau \rightarrow 0^+$, the non-maximal terms become negligible, therefore the summations collapse to maximizations and $L_{k, \tau} \rightarrow l_k$ in a pointwise sense (Proposition \ref{prop:convergence}  in Appendix \ref{app:convergence}). Third, $L_{k, \tau}$ is an upper bound on $l_k$ if and only if $k=1$ (Proposition \ref{prop:bound_hard_loss} in Appendix \ref{app:bound_hard_loss}), but $L_{k, \tau}$ is, up to a scaling factor, an upper bound on $\Lambda_k$ (Proposition \ref{prop:bound_prediction_loss} in Appendix \ref{app:bound_prediction_loss}). This makes it a valid surrogate loss for the minimization of $\Lambda_k$.

\paragraph{Relationship with Cross-Entropy.} We have previously seen that the margin can be rescaled by a factor of $\alpha > 0$. In particular, if we scale $\Delta$ by $\alpha \rightarrow 0^+$ and choose a temperature $\tau=1$, it can be seen that $L_{1,1}$ becomes exactly the cross-entropy loss for classification. In that sense, $L_{k, \tau}$ is a generalization of the cross-entropy loss to: (i) different values of $k \geq 1$, (ii) different values of temperature and (iii) higher margins with the scaling $\alpha$ of $\Delta$. For simplicity purposes, we will keep $\alpha=1$ in this work.

\paragraph{Experimental Validation.} In order to show how smoothing helps the training, we train a DenseNet 40-12 on CIFAR-100 from \cite{Huang2017a} with the same hyper-parameters and learning rate schedule. The only difference with \cite{Huang2017a} is that we replace the cross-entropy loss with $L_{5,\tau}$ for different values of $\tau$. We plot the top-$5$ training error in Figure \ref{fig:cifar100_training} (for each curve, the value of $\tau$ is held constant during training):

\begin{figure}[H]
\hfill
\begin{minipage}[t]{.48\textwidth}
  \raggedleft
  \includegraphics[width=1.1\linewidth]{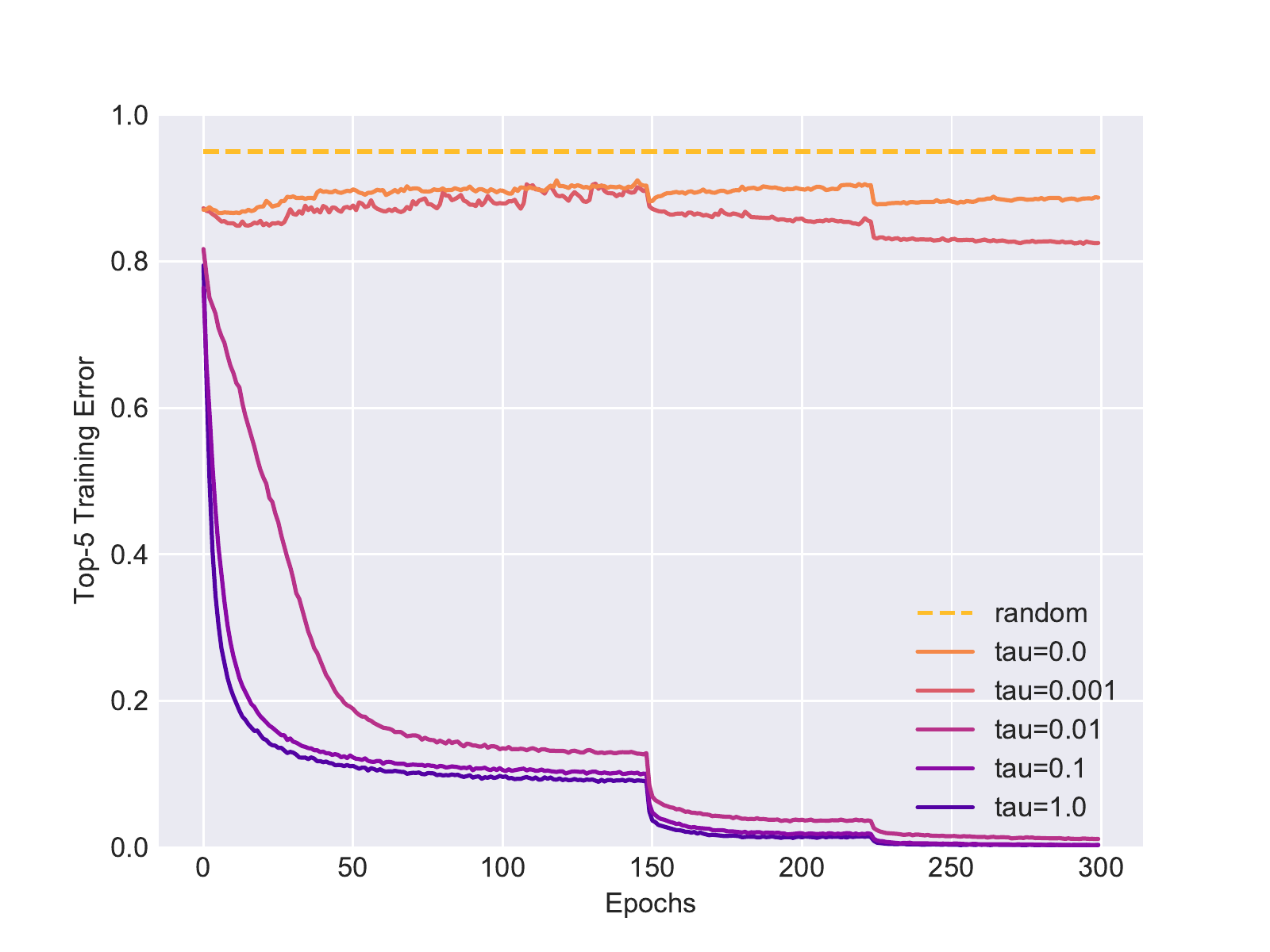}
  \subcaption{\em Top-$5$ training error for different values of $\tau$. The dashed line $y=0.95$ represents the base error for random predictions. The successive drops in the curves correspond to the decreases of the learning rate at epochs 150 and 225.}
  \label{fig:cifar100_training}
  \vfill
\end{minipage}
\hfill
\begin{minipage}[t]{.48\textwidth}
  \raggedright
  \includegraphics[width=1.1\linewidth]{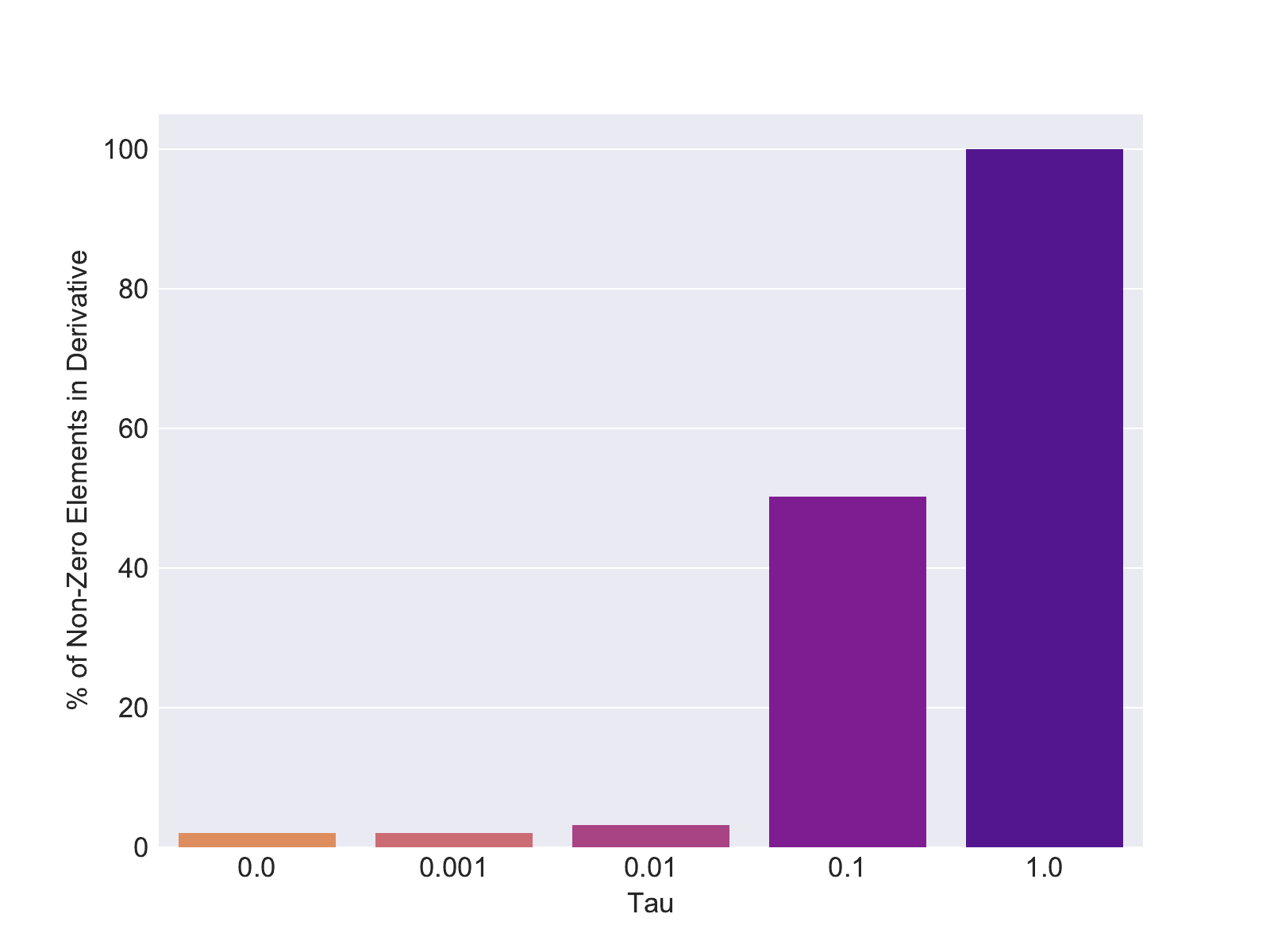}
  \subcaption{\em Proportion of non (numerically) zero elements in the loss derivatives for different values of $\tau$. These values are obtained with the initial random weights of the neural network, and are averaged over the training set.}
  \label{fig:grad}
  \hfill
\end{minipage}
\caption{\em Influence of the temperature $\tau$ on the learning of a DenseNet 40-12 on CIFAR-100. We confirm that smoothing helps the training of a neural network in Figure \ref{fig:cifar100_training}, where a large enough value of $\tau$ greatly helps the performance on the training set. In Figure \ref{fig:grad}, we observe that such high temperatures yield gradients that are not sparse. In other words, with a high temperature, the gradient is informative about a greater number of labels, which helps the training of the model.}
\label{fig:tau}
\end{figure}

We remark that the network exhibits good accuracy when $\tau$ is high enough (0.01 or larger). For $\tau$ too small, the model fails to converge to a good critical point. When $\tau$ is positive but small, the function is smooth but the gradients are numerically sparse (see Figure \ref{fig:grad}), which suggests that the smoothness property is not sufficient and that non-sparsity is a key factor here.

\tocless\section{Computational Challenges and Efficient Algorithms}

\tocless\subsection{Challenge}

Experimental evidence suggests that it is beneficial to use $L_{k, \tau}$ rather than $l_k$ to train a neural network. However, at first glance, $L_{k, \tau}$ may appear prohibitively expensive to compute. Specifically, there are summations over $\mathcal{Y}^{(k)}$ and $\mathcal{Y}^{(k)}_y$, which have a cardinality of $\binom{n}{k}$ and $\binom{n}{k-1}$ respectively. For instance for ImageNet, we have $k=5$ and $n=1,000$, which amounts to $\binom{n}{k} \simeq 8. 10^{12}$ terms to compute and sum over for each single sample, thereby making the approach practically infeasible. This is in stark contrast with $l_k$, for which the most expensive operation is to compute the $k$-th largest score of an array of size $n$, which can be done in $\mathcal{O}(n)$. To overcome this computational challenge, we will now reframe the problem and reveal its exploitable structure.

For  a vector $\mathbf{e} \in \mathbb{R}^n$ and $i \in \{ 1, .., n \}$, we define $\sigma_i(\mathbf{e})$ as the sum of all products of $i$ distinct elements of $\mathbf{e}$. Explicitly, $\sigma_i(\mathbf{e})$ can be written as $\sigma_i(\mathbf{e}) = \sum_{1 \leq j_1< ... < j_i \leq n} e_{j_1} ... e_{j_i}$. The terms $\sigma_i$ are known as the elementary symmetric polynomials. We further define $\sigma_0(\mathbf{e}) = 1$ for convenience.

We now re-write $L_{k, \tau}$ using the elementary symmetric polynomials, which appear naturally when separating the terms that contain the ground truth from the ones that do not:
\begin{align*}
L_{k, \tau}(\mathbf{s}, y)
    &= \tau \log \bigg[ \sum\limits_{\bar{\mathbf{y}} \in \mathcal{Y}^{(k)}} \exp \left(\Delta_k(\bar{\mathbf{y}}, y) / \tau \right) \prod\limits_{j \in \bar{\mathbf{y}}} \exp(s_j / k \tau) \bigg] \\
    &\qquad - \tau \log \bigg[ \sum\limits_{\bar{\mathbf{y}} \in \mathcal{Y}_y^{(k)}} \prod\limits_{j \in \bar{\mathbf{y}}} \exp(s_j / k \tau) \bigg], \\
    &= \tau \log \bigg[ \sum\limits_{\bar{\mathbf{y}} \in \mathcal{Y}_y^{(k)}} \prod\limits_{j \in \bar{\mathbf{y}}} \exp(s_j / k \tau) + \exp \left(1 / \tau \right) \sum\limits_{\bar{\mathbf{y}} \in \mathcal{Y}^{(k)} \backslash \mathcal{Y}_y^{(k)}} \prod\limits_{j \in \bar{\mathbf{y}}} \exp(s_j / k \tau) \bigg] \\
    &\qquad - \tau \log \bigg[ \sum\limits_{\bar{\mathbf{y}} \in \mathcal{Y}_y^{(k)}} \prod\limits_{j \in \bar{\mathbf{y}}} \exp(s_j / k \tau) \bigg], \stepcounter{equation}\tag{\theequation}\label{eq:smooth_topk_2}\\
    &= \tau \log \bigg[ \exp(s_y / k \tau) \sigma_{k-1}(\exp(\mathbf{s}_{\backslash y} / k \tau)) + \exp \left(1 / \tau \right) \sigma_k(\exp(\mathbf{s}_{\backslash y} / k \tau)) \bigg] \\
    &\qquad - \tau \log \bigg[ \exp(s_y / k \tau) \sigma_{k-1}(\exp(\mathbf{s}_{\backslash y} / k \tau)) \bigg].
\end{align*}
Note that the application of $\exp$ to vectors is meant in an element-wise fashion. The last equality of equation (\ref{eq:smooth_topk_2}) reveals that the challenge is to efficiently compute $\sigma_{k-1}$ and $\sigma_{k}$, and their derivatives for the optimization.

While there are existing algorithms to evaluate the elementary symmetric polynomials, they have been designed for computations on CPU with double floating point precision. For the most recent work, see \cite{Jiang2016}. To efficiently train deep neural networks with $L_{k, \tau}$, we need algorithms that are numerically stable with single floating point precision and that exploit GPU parallelization. In the next sections, we design algorithms that meet these requirements. The final performance is compared to the standard alternative algorithm in Appendix \ref{app:perf}.

\tocless\subsection{Forward Computation}

We consider the general problem of efficiently computing $(\sigma_{k-1}, \sigma_k)$. Our goal is to compute $\sigma_k(\mathbf{e})$, where $\mathbf{e} \in \mathbb{R}^n$ and $k \ll n$. Since this algorithm will be applied to $\mathbf{e} = \exp(\mathbf{s}_{\backslash y} / k \tau)$ (see equation (\ref{eq:smooth_topk_2})), we can safely assume $e_i \neq 0$ for all $i \in \llbracket 1, n \rrbracket$.

The main insight of our approach is the connection of $\sigma_i(\mathbf{e})$ to the polynomial:
\begin{equation}
P \triangleq (X + e_1) (X + e_2) ... (X + e_n).
\end{equation}
Indeed, if we expand $P$ to $\alpha_0 + \alpha_1 X + ... + \alpha_n X^n$, Vieta's formula gives the relationship:
\begin{equation}
\forall i \in \llbracket 0, n \rrbracket, \quad \alpha_i = \sigma_{n - i}(\mathbf{e}).
\end{equation}
Therefore, it suffices to compute the coefficients $\alpha_{n - k}$ to obtain the value of $\sigma_k(\mathbf{e})$. To compute the expansion of $P$, we can use a divide-and-conquer approach with polynomial multiplications when merging two branches of the recursion.

This method computes all $(\sigma_i)_{1 \leq i \leq n}$ instead of the only $(\sigma_i)_{k-1 \leq i \leq k}$ that we require. Since we do not need $\sigma_i(\mathbf{e})$ for $i > k$, we can avoid computations of all coefficients for a degree higher than $n - k$. However, typically $k \ll n$. For example, in ImageNet, we have $k=5$ and $n=1,000$, therefore we have to compute coefficients up to a degree 995 instead of 1,000, which is a negligible improvement. To turn $k \ll n$ to our advantage, we notice that $\sigma_{i}(\mathbf{e}) = \sigma_{n}(\mathbf{e}) \sigma_{n - i}(1 / \mathbf{e})$. Moreover, $\sigma_{n}(\mathbf{e}) = \prod\limits_{i=1}^n e_i$ can be computed in $\mathcal{O}(n)$. Therefore we introduce the polynomial:
\begin{equation}
Q \triangleq \sigma_{n}(\mathbf{e}) (X + \frac{1}{e_1}) (X + \frac{1}{e_2}) ... (X + \frac{1}{e_n}).
\end{equation}
Then if we expand $Q$ to $\beta_0 + \beta_1 X + ... + \beta_n X^n$, we obtain with Vieta's formula again:
\begin{equation} \label{eq:beta}
\forall i \in \llbracket 0, n \rrbracket, \quad \beta_i = \sigma_{n}(\mathbf{e}) \sigma_{n - i}(1 / \mathbf{e}) = \sigma_i(\mathbf{e}).
\end{equation}
Subsequently, in order to compute $\sigma_k(\mathbf{e})$, we only require the $k$ first coefficients of $Q$, which is very efficient when $k$ is small in comparison with $n$. This results in a time complexity of $\mathcal{O}(k n)$ (Proposition \ref{prop:complexity} in Appendix \ref{app:time_complexity}). Moreover, there are only $\mathcal{O}(\log(n))$ levels of recursion, and since every level can have its operations parallelized, the resulting algorithm scales very well with $n$ when implemented on a GPU (see Appendix \ref{app:perf_speed} for practical runtimes).

The algorithm is described in Algorithm \ref{algo:fw}: step 2 initializes the polynomials for the divide and conquer method. While the polynomial has not been fully expanded, steps 5-6 merge branches by performing the polynomial multiplications (which can be done in parallel). Step 10 adjusts the coefficients using equation (\ref{eq:beta}). We point out that we could obtain an algorithm with a time complexity of $\mathcal{O}(n \log(k)^2)$ if we were using Fast Fourier Transform for polynomial multiplications in steps 5-6. Since we are interested in the case where $k$ is small (typically 5), such an improvement is negligible.

\begin{algorithm}[h!]
\caption{\em \it Forward Pass}
\begin{algorithmic}[1]
\Require $\mathbf{e} \in (\mathbb{R}_+^*)^n$, $k \in \mathbb{N}^*$
\State $t \gets 0$
\State $P_i^{(t)} \gets (1, 1 / e_i)$ for $i \in \llbracket 1,n \rrbracket$ \Comment{Initialize $n$ polynomials to $X + \frac{1}{e_i}$ (encoded by coefficients)}
\State $p \gets n$ \Comment{Number of polynomials}
\While{$p > 1$} \Comment{Merge branches with polynomial multiplications}
    \State $P^{(t+1)}_1 \gets P^{(t)}_1 * P^{(t)}_2$ \Comment{Polynomial multiplication up to degree $k$}
    \Statex ...
    \State $P^{(t+1)}_{(p - 1) // 2} \gets P^{(t)}_{p - 1} * P^{(t)}_{p}$ \Comment{Polynomial multiplication up to degree $k$}
    \State $t \gets t + 1$
    \State $p \gets (p - 1) // 2$ \Comment{Update number of polynomials}
\EndWhile
\State $P^{(t+1)} \gets P^{(t)} \times \prod\limits_{i=1}^n e_i$ \Comment{Recover $\sigma_i(\mathbf{e}) = \sigma_{n - i}(1 / \mathbf{e}) \sigma_{n}(\mathbf{e})$}
\State \Return $P^{(t+1)}$
\end{algorithmic}
\label{algo:fw}
\end{algorithm}

Obtaining numerical stability in single floating point precision requires special attention: the use of exponentials with an arbitrarily small temperature parameter is fundamentally unstable. In Appendix \ref{app:stab_forward}, we describe how operating in the log-space and using the log-sum-exp trick alleviates this issue. The stability of the resulting algorithm is empirically verified in Appendix \ref{app:perf_stab}.

\tocless\subsection{Backward Computation}

A side effect of using Algorithm \ref{algo:fw} is that a large number of buffers are allocated for automatic differentiation: for each addition in log-space, we apply $\log$ and $\exp$ operations, each of which needs to store values for the backward pass. This results in a significant amount of time spent on memory allocations, which become the time bottleneck. To avoid this, we exploit the structure of the problem and design a backward algorithm that relies on the results of the forward pass. By avoiding the memory allocations and considerably reducing the number of operations, the backward pass is then sped up by one to two orders of magnitude and becomes negligible in comparison to the forward pass. We describe our efficient backward pass in more details below.

First, we introduce the notation for derivatives:
\begin{equation}
\text{For } i \in \llbracket 1, n \rrbracket, 1 \leq j \leq k, \quad \delta_{j, i} \triangleq \dfrac{\partial \sigma_j(\mathbf{e})}{\partial e_i}.
\end{equation}
We now observe that:
\begin{equation} \label{eq:der}
\delta_{j, i} = \sigma_{j-1}(\mathbf{e}_{\backslash i}).
\end{equation}
In other words, equation (\ref{eq:der}) states that $\delta_{j, i}$, the derivative of $\sigma_j(\mathbf{e})$ with respect to $e_i$, is the sum of product of all $(j-1)$-tuples that do not include $e_i$. One way of obtaining $\sigma_{j-1}(\mathbf{e}_{\backslash i})$ is to compute a forward pass for $\mathbf{e}_{\backslash i}$, which we would need to do for every $i \in \llbracket 1, n \rrbracket$. To avoid such expensive computations, we remark that $\sigma_j(\mathbf{e})$ can be split into two terms: the ones that contain $e_i$ (which can expressed as $e_i \sigma_{j-1}(\mathbf{e}_{\backslash i})$) and the ones that do not (which are equal to $\sigma_{j}(\mathbf{e}_{\backslash i})$ by definition). This gives the following relationship:
\begin{equation} \label{eq:sigma_exc}
\sigma_{j}(\mathbf{e}_{\backslash i}) = \sigma_{j}(\mathbf{e}) - e_i \sigma_{j-1}(\mathbf{e}_{\backslash i}).
\end{equation}
Simplifying equation (\ref{eq:sigma_exc}) using equation (\ref{eq:der}), we obtain the following recursive relationship:
\begin{equation} \label{eq:grad_recursion}
\delta_{j, i} = \sigma_{j-1}(\mathbf{e}) - e_i \delta_{j-1, i}.
\end{equation}
Since the $(\sigma_j(\mathbf{e}))_{1 \leq i \leq k}$ have been computed during the forward pass, we can initialize the induction with $\delta_{1, i} = 1$ and iteratively compute the derivatives $\delta_{j, i}$ for $j \geq 2$ with equation (\ref{eq:grad_recursion}). This is summarized in Algorithm \ref{algo:bw}.

\begin{algorithm}[H]
\caption{\em \it Backward Pass}
\begin{algorithmic}[1]
\Require $\mathbf{e}$, $(\sigma_j(\mathbf{e}))_{1 \leq j \leq k}$, $k \in \mathbb{N}^*$ \Comment{$(\sigma_j(\mathbf{e}))_{1 \leq j \leq k}$ have been computed in the forward pass}
\State $\delta_{1, i} = 1$ for $i \in \llbracket 1, n \rrbracket$
\For{$j \in \llbracket 1, k \rrbracket$}
    \State $\delta_{j, i} = \sigma_{j-1}(\mathbf{e}) - e_i \delta_{j-1, i}$ for $i \in \llbracket 1, n \rrbracket$
\EndFor
\end{algorithmic}
\label{algo:bw}
\end{algorithm}

Algorithm \ref{algo:bw} is subject to numerical instabilities (Observation \ref{obs:instability} in Appendix \ref{app:stab_backward}). In order to avoid these, one solution is to use equation (\ref{eq:der}) for each unstable element, which requires numerous forward passes. To avoid this inefficiency, we provide a novel approximation in Appendix \ref{app:stab_backward}: the computation can be stabilized by an approximation with significantly smaller overhead.

\tocless\section{Experiments}

Theoretical results suggest that Cross-Entropy (CE) is an optimal classifier in the limit of infinite data, by accurately approximating the data distribution. In practice, the presence of label noise makes the data distribution more complex to estimate when only a finite number of samples is available. For these reasons, we explore the behavior of CE and $L_{k, \tau}$ when varying the amount of label noise and the training data size. For the former, we introduce label noise in the CIFAR-100 data set \citep{Krizhevsky2009} in a manner that would not perturb the top-$5$ error of a perfect classifier. For the latter, we vary the training data size on subsets of the ImageNet data set \citep{Russakovsky2015}.

In all the following experiments, the temperature parameter is fixed throughout training. This choice is discussed in Appendix \ref{app:temperature_choice}. The algorithms are implemented in Pytorch \citep{Paszke2017} and are publicly available at \url{https://github.com/oval-group/smooth-topk}. Experiments on CIFAR-100 and ImageNet are performed on respectively one and two Nvidia Titan Xp cards.

\tocless\subsection{CIFAR-100 with noise}

\paragraph{Data set.} In this experiment, we investigate the impact of label noise on CE and $L_{5,1}$. The CIFAR-100 data set contains 60,000 RGB images, with 50,000 samples for training-validation and 10,000 for testing. There are 20 \say{coarse} classes, each consisting of 5 \say{fine} labels. For example, the coarse class \say{people} is made up of the five fine labels \say{baby}, \say{boy}, \say{girl}, \say{man} and \say{woman}. In this set of experiments, the images are centered and normalized channel-wise before they are fed to the network. We use the standard data augmentation technique with random horizontal flips and random crops of size 32 $\times$ 32 on the images padded with 4 pixels on each side.

We introduce noise in the labels as follows: with probability $p$, each fine label is replaced by a fine label from the same coarse class. This new label is chosen at random and may be identical to the original label. Note that all instances generated by data augmentation from a single image are assigned the same label. The case $p=0$ corresponds to the original data set without noise, and $p=1$ to the case where the label is completely random (within the fine labels of the coarse class). With this method, a perfect top-$5$ classifier would still be able to achieve 100 \% accuracy by systematically predicting the five fine labels of the unperturbed coarse label.

\paragraph{Methods.} To evaluate our loss functions, we use the architecture DenseNet 40-40 from \cite{Huang2017a}, and we use the same hyper-parameters and learning rate schedule as in \cite{Huang2017a}. The temperature parameter is fixed to one. When the level of noise becomes non-negligible, we empirically find that CE suffers from over-fitting and significantly benefits from early stopping -- which our loss does not need. Therefore we help the baseline and hold out a validation set of 5,000 images, on which we monitor the accuracy across epochs. Then we use the model with the best top-$5$ validation accuracy and report its performance on the test set. Results are averaged over three runs with different random seeds.
\begin{table}[h]
\centering
\caption{\em Testing performance on CIFAR-100 with different levels of label noise. With noisy labels, $L_{5,1}$ consistently outperforms CE on both top-$5$ and top-$1$ accuracies, with improvements increasingly significant with the level of noise. For reference, a model making random predictions would obtain $1\%$ top-$1$ accuracy and $5\%$ top-$5$ accuracy.}
\label{table:cifar100_noise}
\begin{tabular}{l|cc|cc}
Noise & \multicolumn{2}{c}{Top-$1$ Accuracy (\%)} & \multicolumn{2}{|c}{Top-$5$ Accuracy (\%)} \\
Level & CE                                        & $L_{5,1}$                                     & CE          & $L_{5,1}$ \\ \hline %\\
0.0   & {\bf 76.68}                               & 69.33                                         & {\bf 94.34} & 94.29       \\
0.2   & 68.20                                     & {\bf 71.30}                                   & 87.89       & {\bf 90.59} \\
0.4   & 61.18                                     & {\bf 70.02}                                   & 83.04       & {\bf 87.39} \\
0.6   & 52.50                                     & {\bf 67.97}                                   & 79.59       & {\bf 83.86} \\
0.8   & 35.53                                     & {\bf 55.85}                                   & 74.80       & {\bf 79.32} \\
1.0   & 14.06                                     & {\bf 15.28}                                   & 67.70       & {\bf 72.93}
\end{tabular}
\end{table}

\paragraph{Results.} As seen in Table \ref{table:cifar100_noise}, $L_{5,1}$ outperforms CE on the top-$5$ testing accuracy when the labels are noisy, with an improvement of over 5\% in the case $p=1$. When there is no noise in the labels, CE provides better top-$1$ performance, as expected. It also obtains a better top-$5$ accuracy, although by a very small margin. Interestingly, $L_{5,1}$ outperforms CE on the top-$1$ error when there is noise, although $L_{5,1}$ is not a surrogate for the top-$1$ error. For example when $p=0.8$, $L_{5,1}$ still yields an accuracy of 55.85\%, as compared to 35.53\% for CE.  This suggests that when the provided label is only informative about top-$5$ predictions (because of noise or ambiguity), it is preferable to use $L_{5,1}$.

\tocless\subsection{ImageNet}

\paragraph{Data set.} As shown in Figure \ref{fig:imagenet_examples}, the ImageNet data set presents different forms of ambiguity and noise in the labels. It also has a large number of training samples, which allows us to explore different regimes up to the large-scale setting. Out of the 1.28 million training samples, we use subsets of various sizes and always hold out a balanced validation set of 50,000 images. We then report results on the 50,000 images of the official validation set, which we use as our test set. Images are resized so that their smaller dimension is 256, and they are centered and normalized channel-wise. At training time, we take random crops of 224 $\times$ 224 and randomly flip the images horizontally. At test time, we use the standard ten-crop procedure \citep{Krizhevsky2012}.

We report results for the following subset sizes of the data: 64k images (5\%), 128k images (10\%), 320k images (25\%), 640k images (50\%) and finally the whole data set (1.28M $-$ 50k $=$ 1.23M images for training). Each strict subset has all 1,000 classes and a balanced number of images per class. The largest subset has the same slight unbalance as the full ImageNet data set.

\paragraph{Methods.} In all the following experiments, we train a ResNet-18 \citep{He2016}, adapting the protocol of the ImageNet experiment in \cite{Huang2017a}. In more details, we optimize the model with Stochastic Gradient Descent with a batch-size of 256, for a total of 120 epochs. We use a Nesterov momentum of 0.9. The temperature is set to 0.1 for the SVM loss (we discuss the choice of the temperature parameter in Appendix \ref{app:temperature_choice}). The learning rate is divided by ten at epochs 30, 60 and 90, and is set to an initial value of 0.1 for CE and 1 for $L_{5, 0.1}$. The quadratic regularization hyper-parameter is set to $0.0001$ for CE. For $L_{5, 0.1}$, it is set to $0.000025$ to preserve a similar relative weighting of the loss and the regularizer. For both methods, training on the whole data set takes about a day and a half (it is only 10\% longer with $L_{5,0.1}$ than with CE). As in the previous experiments, the validation top-$5$ accuracy is monitored at every epoch, and we use the model with best top-$5$ validation accuracy to report its test error.

\paragraph{Probabilities for Multiple Crops.} Using multiple crops requires a probability distribution over labels for each crop. Then this probability is averaged over the crops to compute the final prediction. The standard method is to use a softmax activation over the scores. We believe that such an approach is only grounded to make top-$1$ predictions. The probability of a label $\bar{y}$ being part of the top-$5$ prediction should be marginalized over all combinations of 5 labels that include $\bar{y}$ as one of their elements. This can be directly computed with our algorithms to evaluate $\sigma_k$ and its derivative. We refer the reader to Appendix \ref{app:proba} for details. All the reported results of top-$5$ error with multiple crops are computed with this method. This provides a systematic boost of at least 0.2\% for all loss functions. In fact, it is more beneficial to the CE baseline, by up to $1\%$ in the small data setting.

\begin{table}[H]
\centering
\caption{\em Top-$5$ accuracy (\%) on ImageNet using training sets of various sizes. Results are reported on the official validation set, which we use as our test set. }
\label{table:imagenet}
\begin{tabular}{ll|cc}
\% Data Set & Number    & CE          & $L_{5,0.1}$\\
            & of Images &             & \\ \hline
100\%       & 1.23M     & {\bf 90.67} & 90.61   \\
50\%        & 640k      & 87.57       & {\bf 87.87} \\
25\%        & 320k      & 82.62       & {\bf 83.38} \\
10\%        & 128k      & 71.06       & {\bf 73.10} \\
5\%         & 64k       & 58.31       & {\bf 60.44}  \\
\end{tabular}
\end{table}

\paragraph{Results.} The results of Table \ref{table:imagenet} confirm that $L_{5,0.1}$ offers better top-$5$ error than CE when the amount of training data is restricted. As the data set size increases, the difference of performance becomes very small, and CE outperforms $L_{5,0.1}$ by an insignificant amount in the full data setting.

\tocless\section{Conclusion}

This work has introduced a new family of loss functions for the direct minimization of the top-$k$ error (that is, without the need for fine-tuning). We have empirically shown that non-sparsity is essential for loss functions to work well with deep neural networks. Thanks to a connection to polynomial algebra and a novel approximation, we have presented efficient algorithms to compute the smooth loss and its gradient. The experimental results have demonstrated that our smooth top-$5$ loss function is more robust to noise and overfitting than cross-entropy when the amount of training data is limited.

We have argued that smoothing the surrogate loss function helps the training of deep neural networks. This insight is not specific to top-$k$ classification, and we hope that it will help the design of other surrogate loss functions. In particular, structured prediction problems could benefit from smoothed SVM losses. How to efficiently compute such smooth functions could open interesting research problems.

\subsubsection*{Acknowledgments}
This work was supported by the EPSRC grants AIMS CDT EP/L015987/1, Seebibyte EP/M013774/1, EP/P020658/1 and TU/B/000048, and by Yougov. Many thanks to A. Desmaison and R. Bunel for the helpful discussions.

\vfill
\pagebreak

% -------------------------------------------------------------------------------------------------
% BIBLIOGRAPHY
% -------------------------------------------------------------------------------------------------

\bibliographystyle{iclr2018_conference}
\bibliography{\string~/workspace/bibliography/standardstrings,\string~/workspace/bibliography/oval}

\vfill
\pagebreak

% -------------------------------------------------------------------------------------------------
% APPENDIX
% -------------------------------------------------------------------------------------------------

\appendix
\tableofcontents

\vfill
\pagebreak

\section{Surrogate Losses: Properties}
\label{app:sec:surr}

In this section, we fix $n$ the number of classes. We let $\tau > 0$ and $k \in \{1,..., n - 1 \}$. All following results are derived with a loss $l_k$ defined as in equation (\ref{eq:hard_topk2}):
\begin{equation}
l_k(\mathbf{s}, y)
    \triangleq \max \left\{ \left( \frac{1}{k} \mathbf{s}_{\backslash y} + \mathbf{1} \right)_{[k]} - \frac{1}{k} s_y, 0 \right\}.
\end{equation}

\subsection{Reformulation}
\label{app:reformulation}

\begin{proposition} \label{prop:reformulation}
We can equivalently re-write $l_k$ as:
\begin{equation}
l_k(\mathbf{s}, y)
    = \max\limits_{\bar{\mathbf{y}} \in \mathcal{Y}^{(k)}} \left\{ \Delta_k(\bar{\mathbf{y}}, y) + \frac{1}{k}\sum\limits_{j \in \bar{\mathbf{y}}} s_j \right\}
    - \max\limits_{\bar{\mathbf{y}} \in \mathcal{Y}_{y}^{(k)}} \left\{ \frac{1}{k} \sum\limits_{j \in \bar{\mathbf{y}}} s_j \right\}.
\end{equation}
\end{proposition}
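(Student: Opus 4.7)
The plan is to evaluate each of the two maxima on the right-hand side in closed form and then verify that their difference equals $l_k(\mathbf{s},y)$. The key observation is that both maxima decompose cleanly once one notes that a tuple in $\mathcal{Y}^{(k)}$ either contains $y$ (in which case $\Delta_k = 0$) or does not (in which case $\Delta_k = 1$).

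First, I would handle the second maximum. Every $\bar{\mathbf{y}} \in \mathcal{Y}_y^{(k)}$ contains $y$, so $\sum_{j \in \bar{\mathbf{y}}} s_j = s_y + \sum_{j \in \bar{\mathbf{y}}\setminus\{y\}} s_j$, and the sum over the remaining $k-1$ distinct indices from $\mathcal{Y}\setminus\{y\}$ is maximized (by a standard exchange argument) by picking the $k-1$ largest entries of $\mathbf{s}_{\backslash y}$. Hence
\[
A \ := \ \max_{\bar{\mathbf{y}}\in\mathcal{Y}_y^{(k)}} \frac{1}{k}\sum_{j\in\bar{\mathbf{y}}} s_j \ = \ \frac{1}{k}\Bigl(s_y + \sum_{p=1}^{k-1}(s_{\backslash y})_{[p]}\Bigr).
\]

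Next, for the first maximum, I would split $\mathcal{Y}^{(k)} = \mathcal{Y}_y^{(k)} \sqcup (\mathcal{Y}^{(k)}\setminus\mathcal{Y}_y^{(k)})$. On $\mathcal{Y}_y^{(k)}$, $\Delta_k = 0$ and the maximum is exactly $A$ by the previous step. On the complement, $\Delta_k = 1$ and all $k$ indices are chosen from $\mathcal{Y}\setminus\{y\}$, so the maximum is
\[
B \ := \ 1 + \frac{1}{k}\sum_{p=1}^{k}(s_{\backslash y})_{[p]}.
\]
Therefore the first maximum equals $\max\{A,B\}$.

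Finally, using that subtraction of a constant commutes with $\max$,
\[
\max\{A,B\} - A = \max\{0,\, B-A\},
\]
and a direct computation gives $B - A = 1 + \tfrac{1}{k}(s_{\backslash y})_{[k]} - \tfrac{1}{k}s_y = \bigl(\tfrac{1}{k}\mathbf{s}_{\backslash y} + \mathbf{1}\bigr)_{[k]} - \tfrac{1}{k}s_y$, which is precisely the argument inside $l_k(\mathbf{s},y)$. This closes the proof.

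The only real obstacle is the combinatorial bookkeeping in Steps 1--2: one must justify that greedily selecting the largest entries of $\mathbf{s}_{\backslash y}$ achieves the maximum sum over $k$-subsets (resp.\ $(k-1)$-subsets) of $\mathcal{Y}\setminus\{y\}$. This is standard, but worth stating explicitly since $\mathcal{Y}^{(k)}$ is defined as ordered tuples with distinct entries rather than subsets, so one should note that the objective depends only on the underlying set, making the greedy choice unambiguous. The remainder is pure algebraic rearrangement.
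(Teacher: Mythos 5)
Your proof is correct and is essentially the paper's own argument run in reverse: the paper starts from $l_k$, adds and subtracts the quantity you call $A$, and identifies the resulting two terms with the maxima over $\mathcal{Y}^{(k)}\backslash\mathcal{Y}_y^{(k)}$ and $\mathcal{Y}_y^{(k)}$, which is exactly your identification of the two maxima as $\max\{A,B\}$ and $A$ together with $\max\{A,B\}-A=\max\{0,B-A\}$. Your explicit remark about the greedy selection of the largest entries (and the tuple-versus-set point) is left implicit in the paper but is the same underlying fact.
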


\begin{proof}
\begin{equation}
\begin{split}
l_k(\mathbf{s}, y)
    &= \max \left\{ \left( \frac{1}{k} \mathbf{s}_{\backslash y} + \mathbf{1} \right)_{[k]} - \frac{1}{k} s_y, 0 \right\}, \\
    &= \max \left\{ \left( \frac{1}{k} \mathbf{s}_{\backslash y} + \mathbf{1} \right)_{[k]} - \frac{1}{k} s_y, 0 \right\} + \left( \frac{1}{k} \sum\limits_{j=1}^{k-1} s_{[j]} + \frac{1}{k} s_y \right) - \left( \frac{1}{k} \sum\limits_{j=1}^{k-1} s_{[j]} + \frac{1}{k} s_y \right), \\
    &= \max \left\{ \left( \frac{1}{k} \mathbf{s}_{\backslash y} + \mathbf{1} \right)_{[k]} + \frac{1}{k} \sum\limits_{j=1}^{k-1} s_{[j]}, \frac{1}{k} \sum\limits_{j=1}^{k-1} s_{[j]} + \frac{1}{k} s_y \right\} - \left( \frac{1}{k} \sum\limits_{j=1}^{k-1} s_{[j]} + \frac{1}{k} s_y \right), \\
    &= \max \left\{ \max\limits_{\bar{\mathbf{y}} \in \mathcal{Y}^{(k)} \backslash \mathcal{Y}_y^{(k)}} \left\{ 1 + \frac{1}{k}\sum\limits_{j \in \bar{\mathbf{y}}} s_j \right\} , \max\limits_{\bar{\mathbf{y}} \in \mathcal{Y}_y^{(k)}} \left\{ \frac{1}{k}\sum\limits_{j \in \bar{\mathbf{y}}} s_j \right\} \right\} - \max\limits_{\bar{\mathbf{y}} \in \mathcal{Y}_y^{(k)}} \left\{ \frac{1}{k}\sum\limits_{j \in \bar{\mathbf{y}}} s_j \right\}, \\
    &= \max\limits_{\bar{\mathbf{y}} \in \mathcal{Y}^{(k)}} \left\{ \Delta_k(\bar{\mathbf{y}}, y) + \frac{1}{k}\sum\limits_{j \in \bar{\mathbf{y}}} s_j \right\}
    - \max\limits_{\bar{\mathbf{y}} \in \mathcal{Y}_{y}^{(k)}} \left\{ \frac{1}{k} \sum\limits_{j \in \bar{\mathbf{y}}} s_j \right\}.
\end{split}
\end{equation}
\end{proof}

\subsection{Point-wise Convergence}
\label{app:convergence}

\begin{lemma} \label{lem:softmax}
Let $n \geq 2$ and $\mathbf{e} \in \mathbb{R}^n$. Assume that the largest element of $\mathbf{e}$ is greater than its second largest element: $e_{[1]} > e_{[2]}$. Then $\lim\limits_{\substack{\tau \to 0 \\ \tau > 0}} \: \tau \log \left( \sum\limits_{i=1}^n \exp(e_i / \tau) \right) = e_{[1]}$.
\end{lemma}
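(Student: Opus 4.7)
The plan is to factor out the dominant exponential and reduce the claim to a simple squeeze. First I would write
\begin{equation*}
\sum_{i=1}^n \exp(e_i/\tau) = \exp(e_{[1]}/\tau) \cdot \sum_{i=1}^n \exp\!\left((e_i - e_{[1]})/\tau\right),
\end{equation*}
which is valid since $\exp(e_{[1]}/\tau) > 0$. Applying $\tau \log(\cdot)$ to both sides splits the expression into the main term and a remainder:
\begin{equation*}
\tau \log\!\left(\sum_{i=1}^n \exp(e_i/\tau)\right) = e_{[1]} + \tau \log\!\left(\sum_{i=1}^n \exp\!\left((e_i - e_{[1]})/\tau\right)\right).
\end{equation*}

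The task then reduces to showing the remainder tends to $0$ as $\tau \to 0^+$. For this, I would observe that every exponent $(e_i - e_{[1]})/\tau$ is nonpositive, so each summand lies in $(0, 1]$. The hypothesis $e_{[1]} > e_{[2]}$ ensures that there is a unique index $i^\star$ achieving $e_{i^\star} = e_{[1]}$, whose contribution is exactly $1$; all other terms are strictly less than $1$ (and in fact tend to $0$, but this is not needed). Consequently
\begin{equation*}
1 \;\leq\; \sum_{i=1}^n \exp\!\left((e_i - e_{[1]})/\tau\right) \;\leq\; n,
\end{equation*}
and monotonicity of the logarithm gives $0 \leq \tau \log(\cdot) \leq \tau \log n$, which squeezes the remainder to $0$.

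Combining the two displays yields the claimed limit. I do not expect a real obstacle here: the only subtlety is recognizing that the uniqueness of the maximizer (guaranteed by $e_{[1]} > e_{[2]}$) gives the clean lower bound of $1$, but even without uniqueness the same argument with a lower bound equal to the number of maximizers would work, since that constant disappears after multiplication by $\tau$. The factor-out-the-max trick is the standard log-sum-exp stability manipulation, and the entire proof should fit in a few lines.
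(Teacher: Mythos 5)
Your proof is correct and takes essentially the same route as the paper's: both factor out $\exp(e_{[1]}/\tau)$ to reduce the claim to showing that $\tau \log\bigl(\sum_{i}\exp((e_i - e_{[1]})/\tau)\bigr) \to 0$. The only difference is in that last step — the paper shows the inner sum converges to $1$ (using the strict gap $e_{[1]} > e_{[2]}$ to kill the non-maximal terms), whereas you squeeze it between $1$ and $n$, which, as you correctly observe, renders the hypothesis unnecessary; both arguments are valid.
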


\begin{proof}
For simplicity of notation, and without loss of generality, we suppose that the elements of $\mathbf{e}$ are sorted in descending order. Then for $i \in \{2, ..n\}$, we have $e_i - e_1 \leq e_2 - e_1 < 0$ by assumption, and thus $\forall \:i \in \{2, ..n\}, \: \lim\limits_{\substack{\tau \to 0 \\ \tau > 0}} \: \exp((e_i - e_1 ) / \tau) = 0$. Therefore:
\begin{equation}
\lim\limits_{\substack{\tau \to 0 \\ \tau > 0}} \: \sum\limits_{i=1}^n \exp((e_i - e_{1} ) / \tau) = \sum\limits_{i=1}^n \lim\limits_{\substack{\tau \to 0 \\ \tau > 0}} \: \exp((e_i - e_{1} ) / \tau) = 1.
\end{equation}
And thus:
\begin{equation}
\lim\limits_{\substack{\tau \to 0 \\ \tau > 0}} \: \tau \log \left( \sum\limits_{i=1}^n \exp((e_i - e_{1} ) / \tau) \right) = 0.
\end{equation}
The result follows by noting that:
\begin{equation}
\begin{split}
\tau \log \left( \sum\limits_{i=1}^n \exp(e_i / \tau) \right)
    &= e_{1} + \tau \log \left( \sum\limits_{i=1}^n \exp((e_i - e_{1} ) / \tau ) \right).
\end{split}
\end{equation}
\end{proof}

\begin{proposition}
\label{prop:convergence}
Assume that $s_{[k-1]} > s_{[k]}$ and that $s_{[k]} > s_{[k+1]}$ or $\dfrac{1}{k} s_y > 1 + \dfrac{1}{k} s_{[k]}$. Then $\lim\limits_{\substack{\tau \to 0 \\ \tau > 0}} \: L_{k, \tau}(\mathbf{s}, y) = l_k(\mathbf{s}, y)$.
\end{proposition}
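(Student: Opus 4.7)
The plan is to identify $l_k$ and $L_{k,\tau}$ as, respectively, a ``max'' and a corresponding ``log-sum-exp'' version of the same expression, then invoke Lemma \ref{lem:softmax} twice---once per log-sum-exp---to push the limit inside. Concretely, I would write $L_{k,\tau}(\mathbf{s},y) = F_\tau(\mathbf{s},y) - G_\tau(\mathbf{s},y)$, where $F_\tau$ and $G_\tau$ are the two log-sum-exp terms of equation (\ref{eq:smooth_topk}). Writing $\mathbf{s}' = \mathbf{s}_{\backslash y}$, the corresponding maxima are
\[
M \;=\; \tfrac{1}{k}\bigl(s_y + s'_{[1]} + \cdots + s'_{[k-1]}\bigr)
\quad\text{and}\quad
\max(A, M), \;\text{ where }\; A \;=\; 1 + \tfrac{1}{k}\bigl(s'_{[1]} + \cdots + s'_{[k]}\bigr),
\]
attained respectively by $\{y\}$ together with the top $k-1$ of $\mathbf{s}'$, and by the top $k$ of $\mathbf{s}'$. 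The reformulation of Proposition \ref{prop:reformulation} then reads $l_k(\mathbf{s}, y) = \max(A, M) - M$, so the target identity reduces to showing $F_\tau \to \max(A, M)$ and $G_\tau \to M$ as $\tau \to 0^+$.

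For each log-sum-exp I would invoke Lemma \ref{lem:softmax}, which requires the largest value of the summand to be strictly greater than the second-largest. The hard part will be checking this strict-uniqueness condition across all configurations, and I would split on the disjunctive hypothesis. When $\tfrac{1}{k}s_y > 1 + \tfrac{1}{k}s_{[k]}$, a direct computation gives $M - A \geq \tfrac{1}{k}s_y - 1 - \tfrac{1}{k}s_{[k]} > 0$, so $l_k(\mathbf{s},y) = 0$; since $M > A$ strictly, the $y$-free summands of $F_\tau$ are exponentially suppressed relative to the $y$-containing ones, so $F_\tau$ and $G_\tau$ share the same dominating contribution and their difference converges to $0$ by cancellation, regardless of any remaining ties further down the summand. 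When instead $s_{[k]} > s_{[k+1]}$, a case-split on the rank of $s_y$ in $\mathbf{s}$, combined with $s_{[k-1]} > s_{[k]}$, yields the strict separations of order statistics of $\mathbf{s}'$ required to make Lemma \ref{lem:softmax} apply to both $F_\tau$ and $G_\tau$; the limits are then $\max(A,M)$ and $M$.

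The principal difficulty, beyond the order-statistic bookkeeping, is that the two disjuncts handle complementary failure modes: the first leverages cancellation in $F_\tau - G_\tau$ to bypass any residual non-uniqueness of the maximizer in $F_\tau$, while the second directly enforces strict uniqueness via order statistics of $\mathbf{s}'$. Verifying that together they cover every configuration is where the delicate part of the proof lies.
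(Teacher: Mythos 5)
Your route is the same as the paper's: split $L_{k,\tau}$ into its two log-sum-exp terms, identify their limiting values, and conclude via the reformulation of Proposition \ref{prop:reformulation} and two applications of Lemma \ref{lem:softmax}; your identification of the two maxima as $M$ and $\max(A,M)$ is exactly the paper's computation made explicit. The one place you genuinely diverge is the disjunct $\frac{1}{k}s_y > 1 + \frac{1}{k}s_{[k]}$, where you replace the appeal to a strict maximum by a direct cancellation bound on $F_\tau - G_\tau = \tau\log\bigl(1 + e^{1/\tau}S_{\bar{y}}(\tau)/S_y(\tau)\bigr) \leq \tau\log\bigl(1+\binom{n}{k}e^{(A-M)/\tau}\bigr) \to 0$; this is an improvement over the paper, since in that regime the maximizer inside $G_\tau$ need not be unique (take $k=2$, $\mathbf{s}=(10,5,5,0)$, $y=1$: the hypotheses hold but two tuples tie at the top), so Lemma \ref{lem:softmax} as stated is not literally applicable there. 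In the other disjunct, however, you inherit the paper's over-claim: no order-statistic bookkeeping on $\mathbf{s}'$ can deliver the strictness the lemma asks for, because the hypotheses do not exclude $A=M$ (e.g.\ $k=2$, $\mathbf{s}=(2,1,0)$, $y=1$ gives $A=M=1.5$, a tie between a $y$-containing and a $y$-free tuple), nor a tie among the $y$-free maximizers when $A>M$ (e.g.\ $k=2$, $\mathbf{s}=(1.5,2,1,1,0)$, $y=1$). The statement survives in all these configurations because $\tau\log\sum_i\exp(e_i/\tau)\to e_{[1]}$ holds with ties as well (a multiplicity $m$ only contributes $\tau\log m \to 0$), so the clean repair --- for your Case 2 and for the paper alike --- is to drop the strictness hypothesis from Lemma \ref{lem:softmax} rather than to chase uniqueness of the maximizers.
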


\begin{proof}
From $s_{[k]} > s_{[k+1]}$ or $\dfrac{1}{k} s_y > 1 + \dfrac{1}{k} s_{[k]}$, one can see that $\max\limits_{\bar{\mathbf{y}} \in \mathcal{Y}^{(k)}} \Big\{\Delta_k(\bar{\mathbf{y}}, y) + \dfrac{1}{k} \sum\limits_{j \in \bar{\mathbf{y}}} s_j \Big\}$ is a strict maximum. Similarly, from $s_{[k-1]} > s_{[k]}$, we have that $\max\limits_{\bar{\mathbf{y}} \in \mathcal{Y}_{y}^{(k)}} \Big\{ \dfrac{1}{k} \sum\limits_{j \in \bar{\mathbf{y}}} s_j \Big\}$ is a strict maximum. Since $L_{k, \tau}$ can be written as:
\begin{equation}
\begin{split}
L_{k, \tau}(\mathbf{s}, y)
    &= \tau \log \bigg[ \sum\limits_{\bar{\mathbf{y}} \in \mathcal{Y}^{(k)}} \exp \bigg( \Big(\Delta_k(\bar{\mathbf{y}}, y) + \dfrac{1}{k} \sum\limits_{j \in \bar{\mathbf{y}}} s_j \Big) / \tau \bigg) \bigg] \\
    &\quad - \tau \log \bigg[ \sum\limits_{\bar{\mathbf{y}} \in \mathcal{Y}_{y}^{(k)}} \exp \bigg( \Big( \dfrac{1}{k} \sum\limits_{j \in \bar{\mathbf{y}}} s_j \Big) / \tau \bigg) \bigg],
\end{split}
\end{equation}
the result follows by two applications of Lemma \ref{lem:softmax}.
\end{proof}

\subsection{Bound on Non-Smooth Function}
\label{app:bound_hard_loss}

\begin{proposition}
\label{prop:bound_hard_loss}
$L_{k, \tau}$ is an upper bound on $l_k$ if and only if $k=1$.
\end{proposition}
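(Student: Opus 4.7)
I would prove the two directions separately. For the easy direction ($k=1 \Rightarrow L_{1,\tau}$ upper bounds $l_1$) I would exploit that $\mathcal{Y}_y^{(1)} = \{(y)\}$ is a singleton, so the second log-sum-exp in $L_{1,\tau}$ collapses exactly to $\tau \log e^{s_y/\tau} = s_y$, matching the second max term in the reformulation (\ref{eq:hard_topk2}) of $l_1$. For the first log-sum-exp, the standard inequality $\tau \log \sum \exp(\cdot/\tau) \geq \max(\cdot)$ applies, so subtracting the two expressions yields $L_{1,\tau}(\mathbf{s}, y) \geq l_1(\mathbf{s}, y)$ pointwise.

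For the harder direction ($k \geq 2 \Rightarrow L_{k,\tau}$ is not an upper bound) a single explicit counterexample suffices. Pick any $y$, any $k$ indices $i_1, \ldots, i_k$ distinct from $y$, set $s_{i_j} = C$ for all $j$, and set all remaining coordinates (including $s_y$) to $0$. Using the reformulation of $l_k$: the first max is attained \emph{uniquely} at the tuple $\{i_1, \ldots, i_k\}$ with value $1 + C$, while the second max is attained at each of the $k$ tuples $\{y\} \cup \{i_1, \ldots, i_k\} \setminus \{i_j\}$ with value $(k-1)C/k$, so $l_k(\mathbf{s}, y) = 1 + C/k$.

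I would then analyse $L_{k,\tau}$ asymptotically as $C \to \infty$ with $\tau, n$ fixed. Every non-maximal tuple in either sum contributes a factor $e^{-\Theta(C)/\tau}$ relative to the max and so is negligible, which lets each log-sum-exp collapse to its maximum plus the logarithm of the multiplicity of that maximum: $\tau \log S_1 = 1 + C + o(1)$ (unique maximum) while $\tau \log S_2 = (k-1)C/k + \tau \log k + o(1)$ (maximum attained $k$ times). Subtracting,
\begin{equation*}
L_{k,\tau}(\mathbf{s}, y) = 1 + C/k - \tau \log k + o(1) \quad \text{so} \quad l_k(\mathbf{s}, y) - L_{k,\tau}(\mathbf{s}, y) \to \tau \log k > 0.
\end{equation*}
Thus any sufficiently large $C$ gives a strict counterexample, valid for every $\tau > 0$ and every $n \geq k + 1$. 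The main obstacle I anticipate is precisely this sign check: naive choices like $\mathbf{s} = \mathbf{0}$ (or score patterns where $S_1$ and $S_2$ have the same maximum-multiplicity) give the wrong sign whenever $n \geq 2k$. The key is to engineer score patterns whose $S_2$ has strictly larger multiplicity at its maximum than $S_1$, so that the $\tau \log(\text{mult}_1/\text{mult}_2) = -\tau \log k$ correction drives $L_{k,\tau}$ below $l_k$.
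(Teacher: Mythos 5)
Your proposal is correct and follows essentially the same route as the paper: the $k=1$ direction uses the same observation that the second log-sum-exp collapses exactly to $s_y$ while the first dominates its max, and the $k\geq 2$ counterexample is the same configuration (ground truth at a low score, $k$ competitors tied at a high score) exploiting the same mechanism, namely the $\tau\log k$ multiplicity bonus in the denominator sum that pushes $L_{k,\tau}$ below $l_k$ as the score gap grows. The only cosmetic difference is that the paper sets the irrelevant scores to $-\infty$ to get an exact closed form before taking the limit, whereas you keep them at $0$ and absorb them into $o(1)$ terms; both are valid.
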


\begin{proof}
Suppose $k=1$. Let $\mathbf{s} \in \mathbb{R}^n$ and $y \in \mathcal{Y}$. We introduce $y^* = \argmax\limits_{\bar{y} \in \mathcal{Y}} \{\Delta_1(\bar{y}, y) + s_{\bar{y}} \}$. Then we have:
\begin{equation}
\begin{split}
l_1(\mathbf{s}, y)
    &= \Delta_1(y^*, y) + s_{y^*} - s_y, \\
    &= \tau \log( \exp ( ( \Delta_1(y^*, y) + s_{y^*} ) / \tau) - \tau \log \exp(s_y / \tau), \\
    &\leq \tau \log( \sum\limits_{\bar{y} \in \mathcal{Y}} \exp( (\Delta_1(\bar{y}, y) + s_{\bar{y}}) / \tau ) - \tau \log \exp(s_y / \tau) = L_{1, \tau} (\mathbf{s}, y).
\end{split}
\end{equation}
Now suppose $k \geq 2$. We construct an example $(\mathbf{s}, y)$ such that $L_{k, \tau}(\mathbf{s}, y) < l_k(\mathbf{s}, y)$. For simplicity, we set $y=1$. Then let $s_1 = \alpha$, $s_i = \beta$ for $i \in \{ 2, ..., k +1 \}$ and $s_i = - \infty$ for $i \in \{ k + 2, ..., n \}$. The variables $\alpha$ and $\beta$ are our degrees of freedom to construct the example. Assuming infinite values simplifies the analysis, and by continuity of $L_{k, \tau}$ and $l_k$, the proof will hold for real values sufficiently small.
We further assume that $1 + \frac{1}{k}(\beta - \alpha) > 0$. Then can write $l_k(\mathbf{s}, y)$ as:
\begin{equation}
l_k(\mathbf{s}, y) = 1 + \frac{1}{k}(\beta - \alpha).
\end{equation}
Exploiting the fact that $\exp(s_i / \tau) = 0$ for $i \geq k + 2$, we have:
\begin{equation}
\sum\limits_{\bar{\mathbf{y}} \in \mathcal{Y}^{(k)} \backslash \mathcal{Y}_y^{(k)}} \prod\limits_{j \in \bar{\mathbf{y}}} \exp((1 + s_j) / k \tau)
    =\exp \left( \frac{1 + \beta}{\tau}\right),
\end{equation}
And:
\begin{equation}
\sum\limits_{\bar{\mathbf{y}} \in \mathcal{Y}_{y}^{(k)}} \exp \bigg( \Big( \dfrac{1}{k} \sum\limits_{j \in \bar{\mathbf{y}}} s_j \Big) / \tau \bigg)
    = k \exp \left(\frac{\alpha + (k-1)\beta}{k \tau}\right).
\end{equation}
This allows us to write $L_{k, \tau}$ as:
\begin{equation}
\begin{split}
L_{k, \tau}(\mathbf{s}, y)
    &= \tau \log \left(k \exp \left(\frac{\alpha + (k-1)\beta}{k \tau}\right) + \exp \left(\frac{1 + \beta}{\tau}\right) \right) - \tau \log \left(k \exp \left(\frac{\alpha + (k-1)\beta}{k \tau}\right) \right), \\
    &= \tau \log \left(1 + \dfrac{\exp \left(\frac{1 + \beta}{\tau}\right)}{k \exp \left(\frac{\alpha + (k-1)\beta}{k \tau}\right)} \right), \\
    &= \tau \log \left(1 + \dfrac{\exp \left(\frac{1}{\tau}\right)}{k \exp \left(\frac{\alpha - \beta}{k \tau}\right)} \right), \\
    &= \tau \log \left(1 + \dfrac{1}{k} \exp \left(\frac{1}{\tau} (1 + \frac{1}{k}(\beta - \alpha)) \right) \right). \\
\end{split}
\end{equation}
We introduce $x = 1 + \frac{1}{k}(\beta - \alpha)$. Then we have:
\begin{equation}
L_{k, \tau}(\mathbf{s}, y)
    = \tau \log \left(1 + \dfrac{1}{k} \exp \left(\frac{x}{\tau} \right) \right),
\end{equation}
And:
\begin{equation}
l_k(\mathbf{s}, y)
    = x.
\end{equation}
For any value $x>0$, we can find $(\alpha, \beta) \in \mathbb{R}^2$ such that $x = 1 + \frac{1}{k}(\beta - \alpha)$ and that all our hypotheses are verified. Consequently, we only have to prove that there exists $x > 0$ such that:
\begin{equation}
\Delta(x) \triangleq \tau \log \left(1 + \dfrac{1}{k} \exp \left(\frac{x}{\tau} \right) \right) - x < 0.
\end{equation}
We show that $\lim\limits_{x\to\infty}\Delta(x) < 0$, which will conclude the proof by continuity of $\Delta$.
\begin{equation}
\begin{split}
\Delta(x)
    &= \tau \log \left(1 + \dfrac{1}{k} \exp \left(\frac{x}{\tau} \right) \right) - x, \\
    &= \tau \log \left(1 + \dfrac{1}{k} \exp \left(\frac{x}{\tau} \right) \right) - \tau \log( \exp(\dfrac{x}{\tau})), \\
    &= \tau \log \left(\exp \left(\frac{-x}{\tau} \right) + \dfrac{1}{k} \right) \xrightarrow[x \to \infty]{} \tau \log( \dfrac{1}{k}) < 0 \quad \text{since} \: k \geq 2.
\end{split}
\end{equation}
\end{proof}

\subsection{Bound on Prediction Loss}
\label{app:bound_prediction_loss}

\begin{lemma} \label{lem:binom}
Let $(p, q) \in \mathbb{N}^2$ such that $p \leq q-1$ and $q \geq 1$. Then $\binom{q}{p} \leq q \binom{q}{p+1}$.
\end{lemma}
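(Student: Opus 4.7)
The statement is a simple combinatorial inequality, so I would attack it directly by computing the ratio of the two binomial coefficients. Since $p \leq q-1$ implies $q - p \geq 1$, both $\binom{q}{p}$ and $\binom{q}{p+1}$ are well-defined and strictly positive, and we may divide freely.

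The plan is to first expand
\[
\frac{\binom{q}{p}}{\binom{q}{p+1}}
  = \frac{q!/(p!\,(q-p)!)}{q!/((p+1)!\,(q-p-1)!)}
  = \frac{p+1}{q-p},
\]
which is a standard one-line factorial manipulation. After this reduction, the claim $\binom{q}{p} \leq q \binom{q}{p+1}$ becomes the equivalent scalar inequality $p+1 \leq q(q-p)$.

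The last step is to verify $p+1 \leq q(q-p)$ from the hypotheses $p \leq q-1$ and $q \geq 1$. Since $p \leq q-1$, we have $p+1 \leq q$, and since $q - p \geq 1$, we have $q(q-p) \geq q$. Chaining these two bounds gives $p+1 \leq q \leq q(q-p)$, which is exactly what is needed.

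There is no real obstacle here; the only thing to be mindful of is the boundary case $p = q-1$ (where $q - p = 1$ and both inequalities in the chain become equalities, giving $\binom{q}{q-1} = q = q\binom{q}{q}$), which confirms the bound is tight and that I have not mishandled any degenerate case. I would also briefly remark that $q \geq 1$ is used only to guarantee that $\binom{q}{p+1}$ is nonzero and that the division by $q - p$ is legitimate.
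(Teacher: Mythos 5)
Your proposal is correct and follows essentially the same route as the paper: both reduce the claim to the ratio $\binom{q}{p}/\binom{q}{p+1} = (p+1)/(q-p)$ and then bound this ratio by $q$. The only cosmetic difference is in the last step, where the paper invokes monotonicity of the ratio in $p$ and evaluates at $p=q-1$, while you chain the two elementary inequalities $p+1 \leq q$ and $q \leq q(q-p)$ directly; both are sound.
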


\begin{proof}
\begin{equation}
\begin{split}
\dfrac{\binom{q}{p}}{\binom{q}{p+1}}
    &= \dfrac{(q-p-1)!(p+1)!}{(q-p)!p!}, \\
    &= \dfrac{(p+1)}{q-p}.
\end{split}
\end{equation}
This is a monotonically increasing function of $p \leq q - 1$, therefore it is upper bounded by its maximal value at $p=q - 1$:
\begin{equation}
\dfrac{\binom{q}{p}}{\binom{q}{p+1}} = \dfrac{(p+1)}{q-p} \leq q.
\end{equation}
\end{proof}

\begin{lemma} \label{lem:bound}
Assume that $y \notin P_k(\mathbf{s})$. Then we have:
\begin{equation}
\dfrac{1}{k} \sum\limits_{\bar{\mathbf{y}} \in \mathcal{Y}_y^{(k)}} \exp \left( \sum\limits_{j \in \bar{\mathbf{y}}} \dfrac{s_j}{k \tau} \right)
    \leq \sum\limits_{\bar{\mathbf{y}} \in \mathcal{Y}^{(k)} \backslash \mathcal{Y}_y^{(k)}} \exp \left( \sum\limits_{j \in \bar{\mathbf{y}}} \dfrac{s_j}{k \tau} \right).
\end{equation}
\end{lemma}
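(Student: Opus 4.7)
The plan is to rewrite the claim as an inequality between elementary symmetric polynomials in $\mathbf{u}_{\setminus y}$ and reduce it to a short pigeonhole bound. Let $u_i \triangleq \exp(s_i/(k\tau)) > 0$ and $\mathbf{v} \triangleq \mathbf{u}_{\setminus y}$. Partitioning $k$-tuples according to whether they contain $y$ yields
\begin{equation*}
\sum_{\bar{\mathbf{y}} \in \mathcal{Y}_y^{(k)}} \prod_{j \in \bar{\mathbf{y}}} u_j \;=\; u_y\, \sigma_{k-1}(\mathbf{v}), \qquad \sum_{\bar{\mathbf{y}} \in \mathcal{Y}^{(k)} \setminus \mathcal{Y}_y^{(k)}} \prod_{j \in \bar{\mathbf{y}}} u_j \;=\; \sigma_k(\mathbf{v}),
\end{equation*}
so the statement is equivalent to $\frac{u_y}{k}\,\sigma_{k-1}(\mathbf{v}) \leq \sigma_k(\mathbf{v})$.

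The main tool will be the double-counting identity
\begin{equation*}
k\,\sigma_k(\mathbf{v}) \;=\; \sum_{T} \bigg(\prod_{j \in T} v_j\bigg) \bigg(\sum_{i \notin T,\ i \neq y} v_i\bigg),
\end{equation*}
where the outer sum runs over all $(k-1)$-subsets $T$ of $\{1,\dots,n\}\setminus\{y\}$; this holds because every $k$-subset arises by adjoining a new element to exactly $k$ different $(k-1)$-subsets. Since $\sigma_{k-1}(\mathbf{v}) = \sum_T \prod_{j \in T} v_j$ and each $\prod_T v_j$ is positive, the desired inequality reduces to the per-tuple bound
\begin{equation*}
u_y \;\leq\; \sum_{i \notin T,\ i \neq y} v_i \qquad \text{for every such } T.
\end{equation*}

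This is where the hypothesis enters. By the definition of $P_k$, $y \notin P_k(\mathbf{s})$ is equivalent to $s_y < s_{[k]}$, so the set $S \triangleq \{i \neq y : s_i \geq s_{[k]}\}$ satisfies $|S| \geq k$, and every $i \in S$ gives $v_i \geq \exp(s_{[k]}/(k\tau)) > u_y$. For any $(k-1)$-subset $T$ of $\{1,\dots,n\}\setminus\{y\}$ one has $|T \cap S| \leq k-1 < |S|$, so pigeonhole produces some $i_\star \in S$ with $i_\star \notin T$; since $i_\star \neq y$, positivity of the remaining terms yields $\sum_{i \notin T,\ i \neq y} v_i \geq v_{i_\star} > u_y$, closing the argument. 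The only delicate step is spotting the double-counting identity; everything else is bookkeeping or an immediate consequence of the hypothesis, and I do not expect Lemma \ref{lem:binom} to be needed in this particular proof.
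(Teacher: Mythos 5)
Your proof is correct, and it takes a genuinely different route from the paper's. After the reduction to $\tfrac{u_y}{k}\,\sigma_{k-1}(\mathbf{v}) \leq \sigma_k(\mathbf{v})$ (which matches how the paper rewrites these sums elsewhere, e.g.\ in equation (\ref{eq:smooth_topk_2})), the paper proceeds probabilistically: it partitions $\mathcal{Y}_y^{(k)}$ and $\mathcal{Y}^{(k)}\backslash\mathcal{Y}_y^{(k)}$ by the size of their intersection with $P_k(\mathbf{s})$, compares the uniform averages over matched blocks via the ``replace $y$ by an element of $P_k(\mathbf{s})$'' operation, and then controls the ratio of block cardinalities $\binom{k}{j}/\binom{k}{j+1}$ by $k$ using Lemma \ref{lem:binom}. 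You instead use the double-counting identity $k\,\sigma_k(\mathbf{v}) = \sum_{T}\bigl(\prod_{j\in T} v_j\bigr)\bigl(\sum_{i\notin T,\, i\neq y} v_i\bigr)$ over $(k-1)$-subsets $T$, and reduce everything to the per-subset bound $u_y \leq \sum_{i\notin T,\, i\neq y} v_i$, which follows by pigeonhole from the hypothesis $s_y < s_{[k]}$ (so at least $k$ indices $i\neq y$ satisfy $v_i > u_y$, and at most $k-1$ of them lie in $T$). All steps check out: the identity is the standard ``each $k$-subset arises from exactly $k$ of its $(k-1)$-subsets'' count, and positivity of the $v_i$ justifies the term-by-term comparison; you even obtain strict inequality. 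Your argument is shorter, avoids Lemma \ref{lem:binom} and the cardinality computations entirely, and stays within the elementary-symmetric-polynomial language already central to the paper's algorithmic sections; the paper's version, by exhibiting the explicit block-matching between tuples with and without $y$, makes the combinatorial correspondence more transparent and isolates exactly where the factor $k$ (the worst-case ratio $|\mathcal{U}_j|/|\mathcal{V}_j|$) comes from, but it is otherwise heavier machinery for the same conclusion.
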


\begin{proof}
Let $j \in \llbracket 0, k-1 \rrbracket$. We introduce the random variable
$U_j$, whose probability distribution is uniform over the set $\mathcal{U}_j \triangleq \{
\bar{\mathbf{y}} \in \mathcal{Y}_y^{(k)} : \bar{\mathbf{y}} \cap
P_k(\mathbf{s}) = j \}$. Then $V_j$ is the random variable such
that $V_j | U_j$ replaces $y$ from $U_j$ with a value drawn uniformly from
$P_k(\mathbf{s})$. We denote by $\mathcal{V}_j$ the set of values taken by
$V_j$ with non-zero probability. Since $V_j$ replaces the ground truth score by one of the values of $P_k(\mathbf{s})$, it can be seen that:
\begin{equation}
\mathcal{V}_j = \{
\bar{\mathbf{y}} \in \mathcal{Y}^{(k)} \backslash \mathcal{Y}_y^{(k)} : \bar{\mathbf{y}} \cap
P_k(\mathbf{s}) = j + 1 \}.
\end{equation}
Furthermore, we introduce the scoring function $f: \bar{\mathbf{y}} \in
\mathcal{Y}^{(k)} \mapsto \exp(\frac{1}{k \tau} \sum \limits_{j \in
  \bar{\mathbf{y}}} s_j)$. Since $P_k(\mathbf{s})$ is the set of the $k$ largest scores and $y \notin P_k(\mathbf{s})$, we have that:
\begin{equation}
f(V_j | U_j) \geq f(U_j) \qquad \text{with probability 1.}
\end{equation}
Therefore we also have that:
\begin{equation}
\mathbb{E}_{V_j| U_j} f(V_j) \geq f(U_j) \qquad \text{with probability 1.}
\end{equation}
This finally gives us:
\begin{align}
  \begin{split}
    \mathbb{E}_{U_j} \mathbb{E}_{V_j|U_j} f(V_j) &\geq \mathbb{E}_{U_j}f(U_j), \\
    \mathbb{E}_{V_j} f(V_j) &\geq \mathbb{E}_{U_j}f(U_j).
  \end{split}
\end{align}
Making the (uniform) probabilities explicit, we obtain:
\begin{align} \label{eq:counting}
  \begin{split}
    \dfrac{1}{|\mathcal{V}_j|} \sum\limits_{\mathbf{v} \in \mathcal{V}_j} f(\mathbf{v}) &\geq \dfrac{1}{|\mathcal{U}_j|} \sum\limits_{\mathbf{u} \in \mathcal{U}_j} f(\mathbf{u}), \\
    \dfrac{|\mathcal{U}_j|}{|\mathcal{V}_j|} \sum\limits_{\mathbf{v} \in \mathcal{V}_j} f(\mathbf{v}) &\geq \sum\limits_{\mathbf{u} \in \mathcal{U}_j} f(\mathbf{u}).
  \end{split}
\end{align}
To derive the set cardinalities, we rewrite $\mathcal{U}_j$ and $\mathcal{V}_j$ as:
\begin{equation}
\begin{split}
\mathcal{U}_j
    &= \{\bar{\mathbf{y}} \in \mathcal{Y}_y^{(k)} : \bar{\mathbf{y}} \cap P_k(\mathbf{s}) = j \}
    = \{y\} \times P_k(\mathbf{s})^{(j)} \times (\mathcal{Y} \backslash (\{y\} \cup P_k(\mathbf{s}))^{(k - j - 1)}, \\
\mathcal{V}_j
    &= \{\bar{\mathbf{y}} \in \mathcal{Y}^{(k)} \backslash \mathcal{Y}_y^{(k)} : \bar{\mathbf{y}} \cap P_k(\mathbf{s}) = j + 1 \}
    = P_k(\mathbf{s})^{(j+1)} \times (\mathcal{Y} \backslash (\{y\} \cup P_k(\mathbf{s}))^{(k - j - 1)}.
\end{split}
\end{equation}
Therefore we have that:
\begin{equation}
\begin{split}
|\mathcal{U}_j|
    &= \left|\{y\} \times P_k(\mathbf{s})^{(j)} \times (\mathcal{Y} \backslash (\{y\} \cup P_k(\mathbf{s}))^{(k - j - 1)} \right|, \\
    &= \binom{k}{j} \binom{n-k-1}{k-j-1},
\end{split}
\end{equation}
And:
\begin{equation}
\begin{split}
|\mathcal{V}_j|
    &= \left|P_k(\mathbf{s})^{(j+1)} \times (\mathcal{Y} \backslash (\{y\} \cup P_k(\mathbf{s}))^{(k - j - 1)} \right|, \\
    &= \binom{k}{j+1} \binom{n-k-1}{k-j-1}.
\end{split}
\end{equation}
Therefore:
\begin{equation}
\begin{split}
\dfrac{|\mathcal{U}_j|}{|\mathcal{V}_j|}
    = \dfrac{\binom{k}{j} \binom{n-k-1}{k-j-1}}{\binom{k}{j+1} \binom{n-k-1}{k-j-1}}
    = \dfrac{\binom{k}{j}}{\binom{k}{j+1}}
    \leq k \quad \text{ by Lemma \ref{lem:binom}}.
\end{split}
\end{equation}
Combining with equation (\ref{eq:counting}), we obtain:
\begin{equation}
k \sum\limits_{\mathbf{v} \in \mathcal{V}_j} f(\mathbf{v}) \geq \sum\limits_{\mathbf{u} \in \mathcal{U}_j} f(\mathbf{u}).
\end{equation}
We sum over $j \in \llbracket 0, k-1 \rrbracket$, which yields:
\begin{equation}
k \sum\limits_{j=0}^{k-1} \sum\limits_{\mathbf{v} \in \mathcal{V}_j} f(\mathbf{v}) \geq \sum\limits_{j=0}^{k-1} \sum\limits_{\mathbf{u} \in \mathcal{U}_j} f(\mathbf{u}).
\end{equation}
Finally, we note that $\left\{\mathcal{U}_j\right\}_{0 \leq j \leq k-1}$ and $\left\{\mathcal{V}_j\right\}_{0 \leq j \leq k-1}$ are respective partitions of $\mathcal{Y}_y^{(k)}$ and $\mathcal{Y}^{(k)} \backslash \mathcal{Y}_y^{(k)}$, which gives us the final result:
\begin{equation}
k \sum\limits_{\mathbf{v} \in \mathcal{Y}^{(k)} \backslash \mathcal{Y}_y^{(k)}} f(\mathbf{v}) \geq \sum\limits_{\mathbf{u} \in \mathcal{Y}_y^{(k)}} f(\mathbf{u}).
\end{equation}
\end{proof}

\begin{proposition}
\label{prop:bound_prediction_loss}
$L_{k, \tau}$ is, up to a scaling factor, an upper bound on the prediction loss $\Lambda_k$:
\begin{equation}
L_{k, \tau}(\mathbf{s}, y) \geq (1 - \tau \log (k)) \Lambda_k(\mathbf{s}, y).
\end{equation}
\end{proposition}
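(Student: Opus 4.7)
The plan is to split into two cases according to the value of $\Lambda_k(\mathbf{s}, y) \in \{0, 1\}$, and in both cases use the decomposition of $\mathcal{Y}^{(k)}$ as the disjoint union $\mathcal{Y}_y^{(k)} \sqcup (\mathcal{Y}^{(k)} \setminus \mathcal{Y}_y^{(k)})$, noting that $\Delta_k(\bar{\mathbf{y}}, y)$ is $0$ on the first piece and $1$ on the second. Introducing
\[
A \triangleq \sum_{\bar{\mathbf{y}} \in \mathcal{Y}_y^{(k)}} \exp\!\Big(\tfrac{1}{k\tau}\sum_{j \in \bar{\mathbf{y}}} s_j\Big), \qquad
B \triangleq \sum_{\bar{\mathbf{y}} \in \mathcal{Y}^{(k)} \setminus \mathcal{Y}_y^{(k)}} \exp\!\Big(\tfrac{1}{k\tau}\sum_{j \in \bar{\mathbf{y}}} s_j\Big),
\]
the smooth loss collapses to the compact form $L_{k,\tau}(\mathbf{s}, y) = \tau \log\!\big(1 + \exp(1/\tau)\,B/A\big)$, since the $A$-part of the first log and the second log cancel.

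If $y \in P_k(\mathbf{s})$, then $\Lambda_k = 0$ and the required bound reduces to $L_{k,\tau} \geq 0$, which is immediate because $B/A \geq 0$ gives $\tau\log(1 + \exp(1/\tau)B/A) \geq \tau \log 1 = 0$. This case is essentially free.

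If $y \notin P_k(\mathbf{s})$, then $\Lambda_k = 1$ and we need $L_{k,\tau}(\mathbf{s}, y) \geq 1 - \tau\log(k)$. This is the substantive case. I would invoke Lemma \ref{lem:bound}, which under precisely this hypothesis yields $A/k \leq B$, i.e.\ $B/A \geq 1/k$. Then I bound
\[
L_{k,\tau}(\mathbf{s}, y) = \tau \log\!\Big(1 + \exp(1/\tau)\tfrac{B}{A}\Big) \geq \tau \log\!\Big(\exp(1/\tau)\tfrac{B}{A}\Big) = 1 + \tau \log(B/A) \geq 1 - \tau \log(k),
\]
where the first inequality drops the $+1$ inside the log and the last inequality uses $B/A \geq 1/k$.

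The main obstacle is entirely encapsulated in Lemma \ref{lem:bound}: once that counting/coupling inequality between the partition functions over $\mathcal{Y}_y^{(k)}$ and $\mathcal{Y}^{(k)}\setminus\mathcal{Y}_y^{(k)}$ is in hand, the proof is a short two-case computation as above. No further technical step is needed; in particular, we do not need a convergence argument in $\tau$ since the bound is uniform in $\tau > 0$.
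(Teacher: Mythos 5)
Your proof is correct and follows essentially the same route as the paper: the same two-case split on $\Lambda_k$, the same decomposition of $\mathcal{Y}^{(k)}$ into $\mathcal{Y}_y^{(k)}$ and its complement, and the same invocation of Lemma \ref{lem:bound} to get $B/A \geq 1/k$, followed by dropping the $+1$ inside the logarithm. The only cosmetic difference is that you discard the $+1$ before substituting the bound on $B/A$ whereas the paper substitutes first; both orderings are valid by monotonicity of $\log$.
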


\begin{proof}
Suppose that $\Lambda_k(\mathbf{s}, y) = 0$. Then the inequality is trivial because $L_{k, \tau}(\mathbf{s}, y) \geq 0$.
We now assume that $\Lambda_k(\mathbf{s}, y) = 1$. Then there exist at least $k$ higher scores than $s_y$. To simplify indexing, we introduce $\mathcal{Z}_y^{(k)} = \mathcal{Y}^{(k)} \backslash \mathcal{Y}_y^{(k)}$ and $\mathcal{T}_k$ the set of $k$ labels corresponding to the $k$-largest scores. By assumption, $y \notin \mathcal{T}_k$ since $y$ is misclassified. We then write:
\begin{equation} \label{eq:split}
\sum\limits_{\bar{\mathbf{y}} \in \mathcal{Y}^{(k)}} \exp \left(\Delta(\bar{\mathbf{y}}, y) / \tau \right) \prod\limits_{j \in \bar{\mathbf{y}}} u_j
    = \exp \left(1 / \tau \right) \sum\limits_{\bar{\mathbf{y}} \in \mathcal{Z}_y^{(k)}} \prod\limits_{j \in \bar{\mathbf{y}}} u_j + \sum\limits_{\bar{\mathbf{y}} \in \mathcal{Y}_y^{(k)}} \prod\limits_{j \in \bar{\mathbf{y}}} u_j.
\end{equation}
Thanks to Lemma \ref{lem:bound}, we have:
\begin{equation}
\sum\limits_{\bar{\mathbf{y}} \in \mathcal{Z}_y^{(k)}} \prod\limits_{j \in \bar{\mathbf{y}}} u_j \geq \dfrac{1}{k} \sum\limits_{\bar{\mathbf{y}} \in \mathcal{Y}_y^{(k)}} \prod\limits_{j \in \bar{\mathbf{y}}} u_j.
\end{equation}
Injecting this back into (\ref{eq:split}):
\begin{equation}
\sum\limits_{\bar{\mathbf{y}} \in \mathcal{Y}^{(k)}} \exp \left(\Delta(\bar{\mathbf{y}}, y) / \tau \right) \prod\limits_{j \in \bar{\mathbf{y}}} u_j
    \geq (1 + \frac{1}{k} \exp \left(1 / \tau \right) ) \sum\limits_{\bar{\mathbf{y}} \in \mathcal{Y}_y^{(k)}} \prod\limits_{j \in \bar{\mathbf{y}}} u_j,
\end{equation}
And back to the original loss:
\begin{equation}
\begin{split}
L_{k, \tau}(\mathbf{s}, y)
    &\geq \tau \log \bigg[ (1 + \frac{1}{k} \exp \left(1 / \tau \right) ) \sum\limits_{\bar{\mathbf{y}} \in \mathcal{Y}_y^{(k)}} \prod\limits_{j \in \bar{\mathbf{y}}} u_j \bigg] - \tau \log \bigg[ \sum\limits_{\bar{\mathbf{y}} \in \mathcal{Y}_y^{(k)}} \prod\limits_{j \in \bar{\mathbf{y}}} u_j \bigg], \\
    &= \tau \log (1 + \frac{1}{k} \exp \left(1 / \tau \right) ) \geq \tau \log (\frac{1}{k} \exp \left(1 / \tau \right) ) = \tau \log (\frac{1}{k}) + 1 = 1 - \tau \log (k).
\end{split}
\end{equation}

\end{proof}

\section{Algorithms: Properties \& Performance}
\subsection{Time Complexity}
\label{app:time_complexity}

\begin{lemma} \label{lem:mul_complexity}
Let $P$ and $Q$ be two polynomials of degree $p$ and $q$. The time complexity of obtaining the first $r$ coefficients of $P Q$ is $\mathcal{O}(\min\{r, p\} \min\{r, q\})$.
\end{lemma}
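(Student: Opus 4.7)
The plan is to reduce the claim to a counting argument on the support of the discrete convolution underlying polynomial multiplication. Writing $P = \sum_{j=0}^{p} P_j X^j$ and $Q = \sum_{l=0}^{q} Q_l X^l$, the coefficient of $X^i$ in the product is
\begin{equation*}
(PQ)_i = \sum_{\substack{0 \le j \le p,\; 0 \le l \le q \\ j + l = i}} P_j Q_l.
\end{equation*}
Computing the first $r$ coefficients, i.e.\ those with $i \in \{0, \ldots, r-1\}$, by direct accumulation therefore costs, up to constant factors, exactly the number of index pairs $(j, l)$ with $0 \le j \le p$, $0 \le l \le q$, and $j + l \le r - 1$, since each such pair contributes one multiplication and one addition into the appropriate output slot and no pair is visited twice.

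The only real step is then to upper bound this count. Every pair $(j, l)$ entering the computation necessarily satisfies $j \le \min(p, r-1)$ and $l \le \min(q, r-1)$, so the number of such pairs is at most $(\min(p, r-1) + 1)(\min(q, r-1) + 1)$. Absorbing the additive constants into the big-$\mathcal{O}$ notation and setting aside the degenerate $r = 0$ case, this is $\mathcal{O}(\min\{r, p\} \min\{r, q\})$, which is exactly the claimed complexity.

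I do not expect a genuine obstacle here: the argument is essentially a one-line enclosure of the triangular region $\{j + l \le r - 1\}$ inside the axis-aligned rectangle $[0,\min(p,r-1)] \times [0,\min(q,r-1)]$. The only things requiring a bit of care are (i) pointing out that the schoolbook algorithm can be written to iterate over precisely the valid pairs, so that the box bound is tight up to constants and not polluted by zero terms, and (ii) handling the boundary regimes ($r \ge p+q+1$, or $p$ or $q$ equal to zero), which is routine since removing pairs only decreases the count.
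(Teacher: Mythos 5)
Your argument is correct and is essentially the paper's own proof spelled out in full: the paper likewise observes that the product is the convolution of the coefficient sequences, truncated at degree $r$ in each factor, which yields the $\mathcal{O}(\min\{r,p\}\min\{r,q\})$ bound. Your explicit enclosure of the triangular index set in the axis-aligned box just makes the counting step precise; there is no substantive difference.
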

\begin{proof}
The multiplication of two polynomials can be written as the convolution of their coefficients, which can be truncated at degree $r$ for each polynomial.
\end{proof}

\begin{proposition} \label{prop:complexity}
The time complexity of Algorithm \ref{algo:fw} is $\mathcal{O}(k n)$.
\end{proposition}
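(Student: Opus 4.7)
The plan is to analyze the cost of Algorithm \ref{algo:fw} level by level in the divide-and-conquer recursion tree, and apply Lemma \ref{lem:mul_complexity} at each level. Let me index the levels so that level $0$ corresponds to the $n$ initial linear polynomials (each of degree $1$) produced in step $2$, and level $t$ corresponds to the state after $t$ rounds of the \textbf{while} loop. At level $t$, there are approximately $n/2^t$ polynomials, and because the polynomial multiplications in steps $5$--$6$ are truncated at degree $k$, the polynomials at level $t$ have degree at most $\min\{2^t, k\}$. The final cost of step $10$ (computing $\prod_i e_i$ and multiplying the resulting polynomial by a scalar) is clearly $\mathcal{O}(n)$ and can be absorbed into the analysis.

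Next I would bound the cost of producing level $t$ from level $t-1$. This level performs roughly $n/2^t$ polynomial multiplications, and each such multiplication takes two polynomials of degree $\min\{2^{t-1}, k\}$, truncated at degree $k$. By Lemma \ref{lem:mul_complexity}, each multiplication therefore costs $\mathcal{O}(\min\{2^{t-1}, k\}^2)$. Multiplying by the number of operations at this level gives a total per-level cost of
\begin{equation*}
C_t = \mathcal{O}\!\left( \frac{n}{2^t} \cdot \min\{2^{t-1}, k\}^2 \right).
\end{equation*}

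The recursion has $T = \lceil \log_2 n \rceil$ levels, and I would split the sum $\sum_{t=1}^{T} C_t$ at the threshold $t^* = \lceil \log_2 k \rceil + 1$, according to whether $2^{t-1} \leq k$ or $2^{t-1} > k$. For $t \leq t^*$, we have $\min\{2^{t-1}, k\} = 2^{t-1}$, so $C_t = \mathcal{O}(n \cdot 2^{t-2})$, and summing over $t = 1, \dots, t^*$ this geometric series gives $\mathcal{O}(n \cdot 2^{t^*}) = \mathcal{O}(nk)$. For $t > t^*$, we have $\min\{2^{t-1}, k\} = k$, so $C_t = \mathcal{O}(nk^2 / 2^t)$, and the resulting geometric sum $\sum_{t > t^*} nk^2 / 2^t$ is dominated by its first term, giving $\mathcal{O}(nk^2 / 2^{t^*}) = \mathcal{O}(nk)$ as well.

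Adding the two regimes together with the $\mathcal{O}(n)$ cost of initialization and the final scalar rescaling yields the claimed $\mathcal{O}(kn)$ bound. I do not expect a major obstacle here; the only subtlety is keeping track of the correct truncation degree in Lemma \ref{lem:mul_complexity} (we need the first $k+1$ coefficients of the product, so both factors may be treated as having degree at most $k$), and handling the fact that when $p$ is odd one polynomial is carried over without multiplication, which only affects constants and does not change the asymptotic bound.
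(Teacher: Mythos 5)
Your proof is correct and is essentially the same argument as the paper's: the paper unrolls the recurrence $T(k,n) = 2T(k,\tfrac{n}{2}) + \min\{k,n\}^2$ into the sum $\sum_{j} 2^j \min\{k, n/2^j\}^2$ and splits it at the depth $n_0$ where $n/2^{n_0} \approx k$, which is precisely your level-by-level sum $\sum_t C_t$ with threshold $t^*$ (only the indexing direction of the recursion tree differs). Both rely on the same truncated-multiplication cost bound of Lemma \ref{lem:mul_complexity} and the same pair of geometric sums, each contributing $\mathcal{O}(kn)$.
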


\begin{proof}
Let $N = \log_2(n)$, or equivalently $n=2^N$. With the divide-and-conquer algorithm, the complexity of computing the $k$ first coefficients of $P$ can be written as:
\begin{equation}
T(k, n) = 2 T(k, \frac{n}{2}) + \min\{k, n\}^2.
\end{equation}
Indeed we decompose $P = Q_1 Q_2$, with each $Q_i$ of degree $n / 2$, and for these we compute their $k$ first coefficients in $T(\frac{n}{2})$. Then given the $k$ first coefficients of $Q_1$ and $Q_2$, the $k$ first coefficients of $P$ are computed in $\mathcal{O}(\min\{k, n\}^2)$ by Lemma \ref{lem:mul_complexity}.
Then we can write:
\begin{equation}
\begin{split}
T(k, n) &= 2 T\big(k, \frac{n}{2}\big) + \min\{k, n\}^2, \\
2 T\big(k, \frac{n}{2}\big) &= 4 T\big(k, \frac{n}{4}\big) + 2 \min \Big\{k, \frac{n}{2}\Big\}^2, \\
&... \\
2^{N-1} T\big(k, \frac{n}{2^{N-1}}\big) &= \underbrace{2^{N} T(k, 1)}_{2^N \mathcal{O}(1) = \mathcal{O}(n)} + 2^{N-1} \min \Big\{k, \frac{n}{2^{N-1}}\Big\}^2.
\end{split}
\end{equation}
By summing these terms, we obtain $T(k, n) = 2^N T(k, 1) + \sum\limits_{j=0}^{N-1} 2^j \min \Big\{k, \dfrac{n}{2^j}\Big\}^2$.
Let $n_0 \in \mathbb{N}$ such that $\dfrac{n}{2^{n_0 + 1}} < k \leq \dfrac{n}{2^{n_0}}$. In loose notation, we have $k \dfrac{2^{n_0}}{n} = \mathcal{O}(1)$. Then we can write:
\begin{equation}
\begin{split}
\sum\limits_{j=0}^{N-1} 2^j \min \Big\{k, \dfrac{n}{2^j}\Big\}^2
    &= \sum\limits_{j=0}^{n_0} 2^j \min \Big\{k, \dfrac{n}{2^j}\Big\}^2 + \sum\limits_{j=n_0 + 1}^{N - 1} 2^j \min \Big\{k, \dfrac{n}{2^j}\Big\}^2, \\
    &= \sum\limits_{j=0}^{n_0} 2^j k^2 + \sum\limits_{j=n_0 + 1}^{N - 1} 2^j \left(\dfrac{n}{2^j}\right)^2, \\
    &=(2^{n_0 + 1} - 1) k^2 + n^2 (2^{-n_0 - 1} - 2^{-N}), \\
    &= \mathcal{O}(k n).
\end{split}
\end{equation}
Thus finally:
\begin{equation}
\begin{split}
T(k, n)
    &= 2^N T(k, 1) + \sum\limits_{j=0}^{N-1} 2^j \min \Big\{k, \dfrac{n}{2^j}\Big\}^2, \\
    &= \mathcal{O}(n) + \mathcal{O}(k n), \\
    &= \mathcal{O}(k n).
\end{split}
\end{equation}
\end{proof}

\subsection{Numerical Stability}
\label{app:stab}

\subsubsection{Forward Pass}
\label{app:stab_forward}
In order to ensure numerical stability of the computation, we maintain all computations in the log space: for a multiplication $\exp(x_1) \exp(x_2)$, we actually compute and store $x_1 + x_2$; for an addition $\exp(x_1) + \exp(x_2)$ we use the \say{log-sum-exp} trick: we compute $m=\max\{x_1, x_2\}$, and store $m + \log(\exp(x_1 - m) + \exp(x_2 - m))$, which guarantees stability of the result. These two operations suffice to describe the forward pass.

\subsubsection{Backward Pass}
\label{app:stab_backward}

\begin{observation} \label{obs:instability}
The backward recursion of Algorithm \ref{algo:bw} is unstable when $e_i \gg 1$ and $e_i \gg \max\limits_{p \neq i}\{ e_p \}$.
\end{observation}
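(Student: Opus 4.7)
The plan is to exhibit the recursion step $\delta_{j,i} = \sigma_{j-1}(\mathbf{e}) - e_i\,\delta_{j-1,i}$ as a subtraction of two nearly-equal positive quantities in the stated regime, so that the result of the subtraction suffers catastrophic cancellation under finite-precision floating point arithmetic.

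First, I would make the size of the two operands explicit. Combining equation (\ref{eq:sigma_exc}) at index $i$ with the identification $\delta_{j-1,i} = \sigma_{j-2}(\mathbf{e}_{\backslash i})$ from equation (\ref{eq:der}), the two terms subtracted in the recursion satisfy
$$\sigma_{j-1}(\mathbf{e}) = \sigma_{j-1}(\mathbf{e}_{\backslash i}) + e_i\,\sigma_{j-2}(\mathbf{e}_{\backslash i}), \qquad e_i\,\delta_{j-1,i} = e_i\,\sigma_{j-2}(\mathbf{e}_{\backslash i}).$$
The true result is $\delta_{j,i} = \sigma_{j-1}(\mathbf{e}_{\backslash i})$, whereas each subtracted operand is at least $e_i\,\sigma_{j-2}(\mathbf{e}_{\backslash i})$.

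Second, I would bound the ratio of the exact answer to the size of the operands in the regime $e_i \gg 1$ and $e_i \gg M$, where $M \triangleq \max_{p \neq i} e_p$. A crude bound gives $\sigma_{j-1}(\mathbf{e}_{\backslash i}) \leq \binom{n-1}{j-1}\,M^{j-1}$, while $\sigma_{j-2}(\mathbf{e}_{\backslash i}) \geq \sigma_{j-2}(m\mathbf{1})$ for some positive lower bound $m$ on the elements of $\mathbf{e}_{\backslash i}$. Therefore
$$\frac{\sigma_{j-1}(\mathbf{e}_{\backslash i})}{e_i\,\sigma_{j-2}(\mathbf{e}_{\backslash i})} \;=\; \mathcal{O}\!\left(\frac{M}{e_i}\right) \;\xrightarrow[e_i/M \to \infty]{} 0.$$

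Third, I would invoke the standard catastrophic cancellation argument. Writing $\varepsilon$ for the machine precision (roughly $2^{-23}$ in single precision), each operand is stored with absolute error on the order of $\varepsilon\,e_i\,\sigma_{j-2}(\mathbf{e}_{\backslash i})$; after subtraction this error pollutes a result of magnitude $\sigma_{j-1}(\mathbf{e}_{\backslash i}) = \mathcal{O}(M\,\sigma_{j-2}(\mathbf{e}_{\backslash i}))$, inducing a relative error on $\delta_{j,i}$ of order $\varepsilon\,e_i/M$. As soon as $e_i/M$ exceeds $1/\varepsilon \approx 10^7$, no significant digits of $\delta_{j,i}$ survive, establishing the instability.

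The main obstacle is presentational rather than technical: the observation is qualitative, so the plan is to quantify the imbalance between $\sigma_{j-1}(\mathbf{e})$ and $e_i\,\delta_{j-1,i}$ sharply enough to justify calling the recursion ``unstable'', while avoiding a heavy accumulated-error analysis across the $k$ recursion steps. A concise way to do this is simply to display the cancellation at a single step, as sketched above, and leave the propagation through subsequent iterations implicit, since the relative error can only grow under further multiplications by $e_i$.
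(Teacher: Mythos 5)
Your argument is correct but takes a genuinely different route from the paper's. The paper's sketch simply postulates a numerical error of order $\epsilon$ in the first subtraction $\sigma_1(\mathbf{e}) - e_i$ and then tracks its amplification through the recursion: each subsequent step multiplies the inherited error by $e_i$, giving $\hat{\delta}_{k,i} = \delta_{k,i} + \mathcal{O}(e_i^{k-1}\epsilon)$, which blows up because $e_i \gg 1$. You instead diagnose the \emph{source} of the error: at every step the two operands $\sigma_{j-1}(\mathbf{e}) = \sigma_{j-1}(\mathbf{e}_{\backslash i}) + e_i\,\sigma_{j-2}(\mathbf{e}_{\backslash i})$ and $e_i\,\delta_{j-1,i} = e_i\,\sigma_{j-2}(\mathbf{e}_{\backslash i})$ are nearly equal relative to their common magnitude, so a single subtraction already loses about $\log_2(e_i/M)$ bits by catastrophic cancellation, and you leave the accumulation implicit. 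The two sketches are complementary and each exploits one of the two hypotheses in the statement: yours uses $e_i \gg \max_{p\neq i} e_p$ to produce the cancellation, the paper's uses $e_i \gg 1$ to amplify it through the $k$ steps; the paper's version never justifies where its initial $\epsilon$ comes from, which is exactly what your cancellation analysis supplies. One small repair: your intermediate bounds $\sigma_{j-1}(\mathbf{e}_{\backslash i}) \le \binom{n-1}{j-1}M^{j-1}$ and $\sigma_{j-2}(\mathbf{e}_{\backslash i}) \ge \sigma_{j-2}(m\mathbf{1})$ only combine to give $\mathcal{O}\big((M/m)^{j-2}\,M/e_i\big)$, with the ratio $M/m$ uncontrolled; the clean way to get the claimed $\mathcal{O}(M/e_i)$ is the identity $(j-1)\,\sigma_{j-1}(\mathbf{v}) = \sum_p v_p\,\sigma_{j-2}(\mathbf{v}_{\backslash p}) \le \big(\sum_p v_p\big)\,\sigma_{j-2}(\mathbf{v}) \le (n-1)M\,\sigma_{j-2}(\mathbf{v})$ applied to $\mathbf{v} = \mathbf{e}_{\backslash i}$. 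With that substitution your sketch is a valid, and arguably more informative, proof of the observation.
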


\begin{ProofSketch}
To see that, assume that when we compute  $(\sum\limits_{p=1}^n e_p) - e_i$, we make a numerical error in the order of $\epsilon$ (e.g $\epsilon \simeq 10^{-5}$ for single floating point precision). With the numerical errors, we obtain approximate $\hat{\delta}$ as follows:
\begin{equation}
\begin{split}
\hat{\delta}_{1, i}
    &= 1, \\
\hat{\delta}_{2, i}
    &= \sigma_1(\mathbf{e}) - e_i \hat{\delta}_{1, i} = \sum\limits_{p=1}^n e_p - e_i = \delta_{2, i} + \mathcal{O}(\epsilon), \\
\hat{\delta}_{3, i}
    &= \sigma_2(\mathbf{e}) - e_i \hat{\delta}_{2, i} = \sigma_2(\mathbf{e}) - e_i (\delta_{2, i} + \mathcal{O}(\epsilon)) = \delta_{3, i} + \mathcal{O}(e_i \epsilon)), \\
    &... \\
\hat{\delta}_{k, i}
    &= \sigma_{k-1}(\mathbf{e}) - e_i \hat{\delta}_{k-1, i} = ... = \delta_{k, i} + \mathcal{O}(e_i^{k-1} \epsilon)).
\end{split}
\end{equation}
Since $e_i \gg 1$, we quickly obtain unstable results.
\end{ProofSketch}

\begin{definition} \label{prop:approximation_num}
For $p \in \{0, ..., n - k \}$, we define the $p$-th order approximation to the gradient as:
\begin{equation} \label{eq:approximation_num}
\tilde{\delta}_{k, i}^{(p)} \triangleq \sum\limits_{j=0}^p (-1)^j \dfrac{\sigma_{k+j}(\mathbf{e})}{e_i^j}.
\end{equation}
\end{definition}

\begin{proposition} \label{prop:error_bound}
If we approximate the gradient by its $p$-th order approximation as defined in equation (\ref{eq:approximation_num}), the absolute error is:
\begin{equation} \label{eq:error_bound}
\left| \delta_{k, i} - \tilde{\delta}_{k, i}^{(p)} \right|
    = \dfrac{\sigma_{k + p}(\mathbf{e}_{\backslash i})}{e_i^{p+1}}.
\end{equation}
\end{proposition}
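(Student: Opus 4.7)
The plan is to prove the stated equality by induction on $p$, using the decomposition identity from equation (\ref{eq:sigma_exc}),
\begin{equation*}
\sigma_j(\mathbf{e}) = e_i \sigma_{j-1}(\mathbf{e}_{\backslash i}) + \sigma_j(\mathbf{e}_{\backslash i}),
\end{equation*}
which splits $\sigma_j(\mathbf{e})$ into the terms that do contain the factor $e_i$ and those that do not. Rearranged as
\begin{equation*}
\sigma_{j-1}(\mathbf{e}_{\backslash i}) = \frac{\sigma_j(\mathbf{e})}{e_i} - \frac{\sigma_j(\mathbf{e}_{\backslash i})}{e_i},
\end{equation*}
this identity drives the whole expansion: it trades $\sigma_{j-1}(\mathbf{e}_{\backslash i})$ for the known quantity $\sigma_j(\mathbf{e})/e_i$ (available from the forward pass) plus a residual that is itself amenable to the same substitution. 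Since $\delta_{k,i} = \sigma_{k-1}(\mathbf{e}_{\backslash i})$ by equation (\ref{eq:der}), iterating this rearrangement is precisely how the alternating series defining $\tilde{\delta}_{k,i}^{(p)}$ arises.

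For the base case $p=0$, a single application of the rearrangement to $\delta_{k,i}$ yields the leading term of $\tilde{\delta}_{k,i}^{(p)}$ together with a residual equal to $\sigma_k(\mathbf{e}_{\backslash i})/e_i$, which matches (\ref{eq:error_bound}) at $p=0$.

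For the inductive step, I would strengthen the target to the signed identity
\begin{equation*}
\delta_{k,i} - \tilde{\delta}_{k,i}^{(p)} = (-1)^{p+1}\, \frac{\sigma_{k+p}(\mathbf{e}_{\backslash i})}{e_i^{p+1}},
\end{equation*}
because tracking only the absolute value would lose the alternating sign needed to telescope. Assuming this holds at order $p-1$, one further application of the rearrangement to $\sigma_{k+p-1}(\mathbf{e}_{\backslash i})$ splits that residual into two pieces: one reproduces exactly (with the right alternating sign) the extra term $(-1)^p \sigma_{k+p}(\mathbf{e})/e_i^{p+1}$ that the approximation gains in going from order $p-1$ to order $p$, while the other gives the new residual $(-1)^{p+2}\sigma_{k+p}(\mathbf{e}_{\backslash i})/e_i^{p+1}$, as required. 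Absolute values then yield the announced bound, using $\sigma_{k+p}(\mathbf{e}_{\backslash i}) \geq 0$, which follows from the standing hypothesis $\mathbf{e} \in (\mathbb{R}_+^*)^n$.

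I do not expect a genuine obstacle: the claim is an exact algebraic identity rather than an estimate, so no analytic control is needed and the only real care is sign and index bookkeeping. The range $p \in \{0,\dots,n-k\}$ in Definition \ref{prop:approximation_num} ensures $\sigma_{k+p}$ is well-defined throughout. The exactness of the residual is in fact what makes the approximation useful: the numerator $\sigma_{k+p}(\mathbf{e}_{\backslash i})$ does not involve $e_i$ at all, so in the regime $e_i \gg \max_{p \neq i} e_p$ highlighted by Observation \ref{obs:instability}, the ratio decays rapidly in $p$ and a small fixed order suffices to stabilise the backward pass.
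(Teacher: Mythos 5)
Your proof is correct and follows essentially the same route as the paper's: the rearranged decomposition $\sigma_{j-1}(\mathbf{e}_{\backslash i}) = \bigl(\sigma_j(\mathbf{e}) - \sigma_j(\mathbf{e}_{\backslash i})\bigr)/e_i$ is exactly the paper's equation (\ref{eq:grad_recursion_2}) written in terms of $\sigma(\mathbf{e}_{\backslash i})$ rather than $\delta$, and your induction with the signed residual $(-1)^{p+1}\sigma_{k+p}(\mathbf{e}_{\backslash i})/e_i^{p+1}$ is just the paper's ``repeated application'' for $j \in \{k+1, \dots, k+p+1\}$ made explicit. One incidental remark: your base case produces the leading term $\sigma_k(\mathbf{e})/e_i$, which (consistently with what the paper's own unrolling yields) indicates that the denominator in Definition~\ref{prop:approximation_num} should read $e_i^{j+1}$ rather than $e_i^j$; with that reading, both arguments establish the stated identity.
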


\begin{proof}
We remind equation (\ref{eq:grad_recursion}), which gives a recursive relationship for the gradients:
\begin{equation*}
\delta_{j, i} = \sigma_{j-1}(\mathbf{e}) - e_i \delta_{j-1, i}.
\end{equation*}
This can be re-written as:
\begin{equation} \label{eq:grad_recursion_2}
\delta_{j-1, i} =  \dfrac{1}{e_i} \left(\sigma_{j-1}(\mathbf{e}) - \delta_{j, i} \right).
\end{equation}
We write $\sigma_{k + p}(\mathbf{e}_{\backslash i}) = \delta_{k + p + 1, i}$, and the result follows by repeated applications of equation (\ref{eq:grad_recursion_2}) for $j \in \{k+1, k+2, ..., k+p+1\}$.
\end{proof}

\paragraph{Intuition.} We have seen in Observation \ref{obs:instability} that the recursion tends to be unstable for $\delta_{j, i}$ when $e_i$ is among the largest elements. When that is the case, the ratio $\frac{\sigma_{k + p}(\mathbf{e}_{\backslash i})}{e_i^{p+1}}$ decreases quickly with $p$. This has two consequences: (i) the sum of equation (\ref{eq:approximation_num}) is stable to compute because the summands have different orders of magnitude and (ii) the error becomes small. Unfortunately, it is difficult to upper-bound the error of equation (\ref{eq:error_bound}) by a quantity that is both measurable at runtime (without expensive computations) and small enough to be informative. Therefore the approximation error is not controlled at runtime. In practice, we detect the instability of $\delta_{k, i}$: numerical issues arise if subtracted terms have a very small relative difference. For those unstable elements we use the $p$-th order approximation (to choose the value of $p$, a good rule of thumb is $p \simeq 0.2 k$). We have empirically found out that this heuristic works well in practice. Note that this changes the complexity of the forward pass to $\mathcal{O}((k+p)n)$ since we need $p$ additional coefficients during the backward. If $p \simeq 0.2 k$, this increases the running time of the forward pass by 20\%, which is a moderate impact.

\subsection{A Performance Comparison with the Summation Algorithm}
\label{app:perf}

\subsubsection{Summation Algorithm}
\label{app:perf_sa}

The Summation Algorithm (SA) is an alternative to the Divide-and-Conquer (DC) algorithm for the evaluation of the elementary symmetric polynomials. It is described for instance in \citep{Jiang2016}. The algorithm can be summarized as follows:

\begin{algorithm}[h!]
\caption{\em \it Summation Algorithm}
\begin{algorithmic}[1]
\Require $\mathbf{e} \in \mathbb{R}^n$, $k \in \mathbb{N}^*$
\State $\sigma_{0, i} \gets 1$ for $1 \leq i \leq n$ \Comment{$\sigma_{j,i} = \sigma_j(e_1, \ldots, e_i)$ }
\State $\sigma_{j, i} \gets 0$ for $i < j$ \Comment{Do not define values for $i < j$ (meaningless)}
\State $\sigma_{1, 1} \gets e_1$ \Comment{Initialize recursion}
\For{$i \in \llbracket 2, n \rrbracket$}
    \State $m \gets \max \{1, i + k - n\}$
    \State $M \gets \min \{i, k\}$
    \For{$i \in \llbracket m, M \rrbracket$}
        \State $\sigma_{j, i} \gets \sigma_{j, i-1} + e_i \sigma_{j-1, i-1}$
    \EndFor
\EndFor
\State \Return $\sigma_{k, n}$
\end{algorithmic}
\label{algo:summation}
\end{algorithm}

\paragraph{Implementation.} Note that the inner loop can be parallelized, but the outer one is essentially sequential. In our implementation for speed comparisons, the inner loop is parallelized and a buffer is pre-allocated for the $\sigma_{j,i}$.

\subsubsection{Speed}
\label{app:perf_speed}

We compare the execution time of the DC and SA algorithms on a GPU (Nvidia Titan Xp). We use the following parameters: $k=5$, a batch size of $256$ and a varying value of $n$. The following timings are given in seconds, and are computed as the average of 50 runs. In Table \ref{table:speed_forward}, we compare the speed of Summation and DC for the evaluation of the forward pass. In Table \ref{table:speed_backward}, we compare the speed of the evaluation of the backward pass using Automatic Differentiation (AD) and our Custom Algorithm (CA) (see Algorithm \ref{algo:bw}).

\begin{table}[h]
\centering
\caption{\em Execution time (s) of the forward pass. The Divide and Conquer (DC) algorithm offers nearly logarithmic scaling with $n$ in practice, thanks to its parallelization. In contrast, the runtime of the Summation Algorithm (SA) scales linearly with $n$.}
\label{table:speed_forward}
\begin{tabular}{l|ccccc}
n  & 100   & 1,000 & 10,000 & 100,000 \\ \hline
SA & 0.006 & 0.062 & 0.627  & 6.258 \\
DC & 0.011 & 0.018 & 0.024  & 0.146 \\
\end{tabular}
\end{table}

We remind that both algorithms have a time complexity of $\mathcal{O}(kn)$. SA provides little parallelization (the parallelizable inner loop is small for $k \ll n$), which is reflected in the runtimes. On the other hand, DC is a recursive algorithm with $\mathcal{O}(\log(n))$ levels of recursion, and all operations are parallelized at each level of the recursion. This allows DC to have near-logarithmic effective scaling with $n$, at least in the range $\{100 - 10,000\}$.

\begin{table}[h]
\centering
\caption{\em Execution time (s) of the backward pass. Our Custom Backward (CB) is faster than Automatic Differentiation (AD).}
\label{table:speed_backward}
\begin{tabular}{l|ccccc}
n       & 100   & 1,000 & 10,000 & 100,000 \\ \hline
DC (AD) & 0.093 & 0.139 & 0.194  & 0.287 \\
DC (CB) & 0.007 & 0.006 & 0.020  & 0.171 \\
\end{tabular}
\end{table}

These runtimes demonstrate the advantage of using Algorithm \ref{algo:bw} instead of automatic differentiation. In particular, we see that in the use case of ImageNet ($n=1,000$), the backward computation changes from being 8x slower than the forward pass to being 3x faster.

\subsubsection{Stability}
\label{app:perf_stab}

We now investigate the numerical stability of the algorithms. Here we only analyze the numerical stability, and not the precision of the algorithm. We point out that compensation algorithms are useful to improve the precision of SA but not its stability. Therefore they are not considered in this discussion.

\cite{Jiang2016} mention that SA is a stable algorithm, under the assumption that no overflow or underflow is encountered. However this assumption is not verified in our use case, as we demonstrate below. We consider that the algorithm is stable if no overflow occurs in the algorithm (underflows are not an issue for our use cases). We stress out that numerical stability is critical for our machine learning context: if an overflow occurs, the weights of the learning model inevitably diverge to infinite values.

To test numerical stability in a representative setting of our use cases, we take a random mini-batch of 128 images from the ImageNet data set and forward it through a pre-trained ResNet-18 to obtain a vector of scores per sample. Then we use the scores as an input to the SA and DC algorithms, for various values of the temperature parameter $\tau$. We compare the algorithms with single (S) and double (D) floating point precision.

\begin{table}[!h]
\centering
\caption{\em Stability on forward pass. A setting is considered stable if no overflow has occurred.}
\label{table:stab_forward}
\begin{tabular}{l|cccccc}
$\tau$     & $10^{1}$   & $10^{0}$   & $10^{-1}$  & $10^{-2}$  & $10^{-3}$  & $10^{-4}$ \\ \hline
SA (S)     & \checkmark & \checkmark & \ding{55}  & \ding{55}  & \ding{55}  & \ding{55}   \\
SA (D)     & \checkmark & \checkmark & \checkmark & \ding{55}  & \ding{55}  & \ding{55}   \\
DC log (S) & \checkmark & \checkmark & \checkmark & \checkmark & \checkmark & \checkmark   \\
DC log (D) & \checkmark & \checkmark & \checkmark & \checkmark & \checkmark & \checkmark   \\
\end{tabular}
\end{table}

By operating in the log-space, DC is significantly more stable than SA. In this experimental setting, DC log is stable in single floating point precision until $\tau=10^{-36}$.

\section{Top-k Prediction: Marginalization with the Elementary Symmetric Polynomials}
\label{app:proba}

We consider the probability of label $i$ being part of the final top-$k$ prediction. To that end, we marginalize over all $k$-tuples that contain $i$ as one of their element. Then the probability of selecting label $i$ for the top-$k$ prediction can be written as:
\begin{equation}
p_i^{(k)} \propto \sum\limits_{\bar{\mathbf{y}} \in \mathcal{Y}_i^{(k)}} \exp(\sum\limits_{j \in \bar{\mathbf{y}}} s_j).
\end{equation}

\begin{proposition}
The unnormalized probability can be computed as:
\begin{equation}
p_i^{(k)} \propto \dfrac{d \log \sigma_i(\exp(\mathbf{s}))}{d s_i}.
\end{equation}
\end{proposition}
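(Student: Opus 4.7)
The plan is to identify the unnormalized probability $p_i^{(k)}$ with the derivative $e_i\,\delta_{k,i}$ where $\mathbf{e} = \exp(\mathbf{s})$, and then observe that the chain rule applied to $\log \sigma_k(\mathbf{e})$ produces exactly this quantity up to the $i$-independent normalization $\sigma_k(\mathbf{e})$. (I read the statement as involving $\sigma_k$ rather than $\sigma_i$; the index $i$ is the label being marginalized for.)

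First, I would turn the exponential of a sum into a product by writing $\exp\bigl(\sum_{j \in \bar{\mathbf{y}}} s_j\bigr) = \prod_{j \in \bar{\mathbf{y}}} e_j$, so that
\begin{equation*}
p_i^{(k)} \;\propto\; \sum_{\bar{\mathbf{y}} \in \mathcal{Y}_i^{(k)}} \prod_{j \in \bar{\mathbf{y}}} e_j.
\end{equation*}
Every tuple in $\mathcal{Y}_i^{(k)}$ contains $i$ as one of its elements, so $e_i$ can be factored out of each product. The remaining sum ranges over all $(k-1)$-subsets of $\mathcal{Y}\setminus\{i\}$ and is therefore $\sigma_{k-1}(\mathbf{e}_{\backslash i})$. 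By equation (\ref{eq:der}), this is precisely $\delta_{k,i} = \partial \sigma_k(\mathbf{e}) / \partial e_i$, giving
\begin{equation*}
p_i^{(k)} \;\propto\; e_i \,\frac{\partial \sigma_k(\mathbf{e})}{\partial e_i}.
\end{equation*}

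Second, I would apply the chain rule. Since $e_i = \exp(s_i)$ depends only on $s_i$ among the score variables, $d e_i / d s_i = e_i$, so
\begin{equation*}
\frac{d \log \sigma_k(\exp(\mathbf{s}))}{d s_i}
 \;=\; \frac{1}{\sigma_k(\mathbf{e})} \,\frac{\partial \sigma_k(\mathbf{e})}{\partial e_i}\, \frac{d e_i}{d s_i}
 \;=\; \frac{e_i}{\sigma_k(\mathbf{e})}\,\frac{\partial \sigma_k(\mathbf{e})}{\partial e_i}.
\end{equation*}
The factor $\sigma_k(\mathbf{e})$ is the same for every $i$, so it merely contributes to the normalization constant. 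Comparing the two displays yields $p_i^{(k)} \propto d \log \sigma_k(\exp(\mathbf{s})) / d s_i$.

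There is no substantial obstacle: the argument is essentially a combinatorial factorization followed by one application of the chain rule. The only thing to be careful about is ensuring the inclusion $i \in \bar{\mathbf{y}}$ is correctly exploited when factoring out $e_i$, and invoking equation (\ref{eq:der}) to match the combinatorial expression with the partial derivative computed in the backward pass.
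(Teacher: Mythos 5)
Your proposal is correct and follows essentially the same route as the paper's proof: factor $e_i$ out of the sum over tuples containing $i$ to get $e_i\,\sigma_{k-1}(\mathbf{e}_{\backslash i}) = e_i\,\partial\sigma_k(\mathbf{e})/\partial e_i$, apply the chain rule through $e_i = \exp(s_i)$, and absorb the $i$-independent factor $\sigma_k(\mathbf{e})$ into the proportionality constant to obtain the logarithmic derivative. Your reading of the statement's $\sigma_i$ as a typo for $\sigma_k$ matches the paper's evident intent.
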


\begin{proof}
\begin{equation}
\begin{split}
p_i^{(k)}
    &\propto \exp(s_i) \sigma_{k-1}(\exp(\mathbf{s}_{\backslash i})), \\
    &= \exp(s_i) \dfrac{d \sigma_i(\exp(\mathbf{s}))}{d \exp(s_i)}, \\
    &= \dfrac{d \sigma_i(\exp(\mathbf{s}))}{d s_i}.
\end{split}
\end{equation}
Finally we can rescale the unnormalized probability $p_i^{(k)}$ by $\sigma_k(\exp(\mathbf{s}))$ since the latter quantity is independent of $i$. We obtain:
\begin{equation}
\hat{p_i}^{(k)} \propto \dfrac{1}{\sigma_k(\exp(\mathbf{s}))} \dfrac{d \sigma_i(\exp(\mathbf{s}))}{d s_i} = \dfrac{d \log \sigma_i(\exp(\mathbf{s}))}{d s_i}.
\end{equation}
\end{proof}

\paragraph{NB.} We prefer to use $\dfrac{d \log \sigma_i(\exp(\mathbf{s}))}{d s_i}$ rather than $\dfrac{d \sigma_i(\exp(\mathbf{s}))}{d s_i}$ for stability reasons. Once the unnormalized probabilities are computed, they can be normalized by simply dividing by their sum.

\section{Hyper-Parameters \& Experimental Details}
\subsection{The Temperature Parameter}
\label{app:temperature_choice}

In this section, we discuss the choice of the temperature parameter. Note that such insights are not necessarily confined to a top-$k$ minimization: we believe that these ideas generalize to any loss that is smoothed with a temperature parameter.

\subsubsection{Optimization and Learning}

When the temperature $\tau$ has a low value, propositions \ref{prop:bound_hard_loss} and \ref{prop:bound_prediction_loss} suggest that $L_{k, \tau}$ is a sound learning objective. However, as shown in Figure \ref{fig:cifar100_training}, optimization is difficult and can fail in practice. Conversely, optimization with a high value of the temperature is easy, but uninformative about the learning: then $L_{k, \tau}$ is not representative of the task loss we wish to learn.

In other words, there is a trade-off between the ease of the optimization and the quality of the surrogate loss in terms of learning. Therefore, it makes sense to use a low temperature that still permits satisfactory optimization.

\subsubsection{Illustration on CIFAR-100}

In Figure \ref{fig:cifar100_training}, we have provided the plots of the training objective to illustrate the speed of convergence. In Table \ref{table:cifar100_top1}, we give the training and validation accuracies to show the influence of the temperature:

\begin{table}[H]
\centering
\caption{\em Influence of the temperature parameter on the training accuracy and testing accuracy.}
\label{table:cifar100_top1}
\begin{tabular}{l|c|c}
Temperature & Training Accuracy (\%) & Testing Accuracy (\%) \\ \hline
0           & 10.01                  & 10.38 \\
$10^{-3}$   & 17.40                  & 18.19 \\
$10^{-2}$   & 98.95                  & 91.35 \\
$10^{-1}$   & 99.73                  & 91.70 \\
$10^0$      & 99.78                  & 91.52 \\
$10^1$      & 99.62                  & 90.92 \\
$10^2$      & 99.42                  & 90.46 \\
\end{tabular}
\end{table}

\subsubsection{To Anneal or Not To Anneal}

The choice of temperature parameter can affect the scale of the loss function. In order to preserve a sensible trade-off between regularizer and loss, it is important to adjust the regularization hyper-parameter(s) accordingly (the value of the quadratic regularization for instance). Similarly, the energy landscape may vary significantly for a different value of the temperature, and the learning rate may need to be adapted too.

Continuation methods usually rely on an annealing scheme to gradually improve the quality of the approximation. For this work, we have found that such an approach required heavy engineering and did not provide substantial improvement in our experiments. Indeed, we have mentioned that other hyper-parameters depend on the temperature, thus these need to be adapted dynamically too. This requires sensitive heuristics. Furthermore, we empirically find that setting the temperature to an appropriate fixed value yields the same performance as careful fine-tuning of a pre-trained network with temperature annealing.

\subsubsection{Practical Methodology}

We summarize the methodology that reflects the previous insights and that we have found to work well during our experimental investigation. First, the temperature hyper-parameter is set to a low fixed value that allows for the model to learn on the training data set. Then other hyper-parameters, such as quadratic regularization and learning rate are adapted as usual by cross-validation on the validation set. We believe that the optimal value of the temperature is mostly independent of the architecture of the neural network, but is greatly influenced by the values of $k$ and $n$ (see how these impact the number of summands involved in $L_{k, \tau}$, and therefore its scale).

\subsection{The Margin}
\label{app:margin}

\subsubsection{Relationship with Squared Norm Regularization}

In this subsection, we establish the relationship between hyper-parameters of the margin and of the regularization with a squared norm. Typically the regularizing norm is the Frobenius norm in deep learning, but the following results will follow for any norm $\|\cdot\|$. Although we prove the result for our top-$k$ loss, we also point out that these results easily generalize to any linear latent structural SVM.

First, we make explicit the role of $\alpha$ in $l_k$ with an overload of notation:
\begin{equation}
l_k(\mathbf{s}, y, \alpha) = \max \left\{ \left( \mathbf{s}_{\backslash y} + \alpha \mathbf{1} \right)_{[k]} - s_y, 0 \right\},
\end{equation}
where $\alpha$ is a non-negative real number. Now consider the problem of learning a linear top-$k$ SVM on a dataset $\left(\mathbf{x}_i, y_i\right)_{1\leq i \leq N} \in \left( \mathbb{R}^d \times \{1, ..., n\} \right)^N$. We (hyper-)parameterize this problem by $\lambda$ and $\alpha$:
\begin{equation}
(P_{\lambda, \alpha}): \qquad \min\limits_{\mathbf{w} \in \mathbb{R}^{d \times n}} \dfrac{\lambda}{2} \| \mathbf{w} \|^2 + \dfrac{1}{N} \sum\limits_{i=1}^N l_k(\mathbf{w}^T \mathbf{x}_i, y_i, \alpha).
\end{equation}

\begin{definition} \label{def:margin}
Let $\lambda_1, \lambda_2, \alpha_1, \alpha_2 \geq 0$.
We say that $(P_{\lambda_1, \alpha_1})$ and $(P_{\lambda_2, \alpha_2})$ are equivalent if there exists $\gamma > 0, \nu \in \mathbb{R}$ such that:
\begin{equation}
\mathbf{w} \in \mathbb{R}^{d \times n} \text{ is a solution of } (P_{\lambda_1, \alpha_1}) \iff (\gamma \mathbf{w} + \nu) \text{ is a solution of } (P_{\lambda_2, \alpha_2})
\end{equation}
\end{definition}

\paragraph{Justification.} This definition makes sense because for $\gamma > 0, \nu \in \mathbb{R}$, $(\gamma \mathbf{w} + \nu)$ has the same decision boundary as $\mathbf{w}$. In other words, equivalent problems yield equivalent classifiers.

\begin{proposition} \label{prop:margin}
Let $\lambda, \alpha \geq 0$.
\begin{enumerate}
    \item If $\alpha > 0$ and $\lambda > 0$, then problem $(P_{\lambda, \alpha})$ is equivalent to problems $(P_{\alpha \lambda, 1})$ and $(P_{1, \alpha \lambda})$.
    \item If $\alpha = 0$ or $\lambda = 0$, then problem $(P_{\lambda, \alpha})$ is equivalent to problem $(P_{0, 0})$.
\end{enumerate}
\end{proposition}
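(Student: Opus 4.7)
The plan is to derive the whole proposition from a single positive-homogeneity identity for $l_k$ in the joint variable $(\mathbf{s}, \alpha)$, and then to combine it with an invertible affine reparameterization of $\mathbf{w}$.

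First I would establish that, for any $\gamma > 0$,
\begin{equation*}
l_k(\gamma \mathbf{s}, y, \alpha) \;=\; \gamma \, l_k\!\left(\mathbf{s}, y, \alpha/\gamma\right),
\end{equation*}
which follows by factoring $\gamma$ out of $(\gamma \mathbf{s}_{\backslash y} + \alpha \mathbf{1})_{[k]} - \gamma s_y$: the top-$k$ operator is positively homogeneous and the outer $\max\{\cdot,0\}$ preserves a positive scalar. Substituting $\mathbf{w} = \mathbf{w}'/\gamma$ into the objective of $(P_{\lambda,\alpha})$ and using this identity gives
\begin{equation*}
\frac{\lambda}{2\gamma^2}\|\mathbf{w}'\|^2 + \frac{1}{N\gamma}\sum_i l_k(\mathbf{w}'^T\mathbf{x}_i, y_i, \alpha\gamma).
\end{equation*}
Multiplying by the positive constant $\gamma$ (which leaves argmins unchanged) yields precisely the objective of $(P_{\lambda/\gamma,\, \alpha\gamma})$. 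Hence the bijection $\mathbf{w}\leftrightarrow \gamma \mathbf{w}$ (taking $\nu = 0$ in Definition~\ref{def:margin}) sends solutions of $(P_{\lambda,\alpha})$ to solutions of $(P_{\lambda/\gamma, \alpha\gamma})$.

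Part~1 is then immediate by specialization. When $\alpha,\lambda>0$ both choices $\gamma=1/\alpha$ and $\gamma=\lambda$ are admissible: the first yields $\alpha\gamma = 1$ and $\lambda/\gamma = \alpha\lambda$, hence equivalence with $(P_{\alpha\lambda, 1})$; the second yields $\lambda/\gamma = 1$ and $\alpha\gamma = \alpha\lambda$, hence equivalence with $(P_{1, \alpha\lambda})$.

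For Part~2 I would treat the degenerate cases by taking limits in the same rescaling. If $\alpha = 0$, then $l_k(\cdot, y, 0)$ is itself positively homogeneous of degree one, so $l_k(\mathbf{w}^T\mathbf{x}_i, y_i, 0)\geq 0$ with equality at $\mathbf{w}=0$; the regularizer vanishes there as well, and the identity $(\lambda/2)\|\mathbf{w}\|^2 + (1/N)\sum_i l_k(\mathbf{w}^T\mathbf{x}_i, y_i, 0)$ coincides with the $(P_{0,0})$ objective once the quadratic regularizer is absorbed by scaling $\mathbf{w}\to\gamma\mathbf{w}$. If instead $\lambda = 0$, the rescaled objective $\tfrac{1}{N}\sum_i l_k(\mathbf{w}'^T\mathbf{x}_i, y_i, \alpha\gamma)$ from above becomes $(P_{0,0})$ in the limit $\gamma\to 0^+$, after renormalization. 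The main obstacle here is that the argument is genuinely a limiting one, whereas Definition~\ref{def:margin} quantifies over a single fixed $\gamma>0$; the cleanest way to close this gap is to verify directly that the solution sets of $(P_{\lambda,0})$ and $(P_{0,\alpha})$ coincide with that of $(P_{0,0})$ under the stated affine correspondence, using the homogeneity of $l_k(\cdot, y, 0)$ and the fact that $\mathbf{w}=0$ is always a minimizer on the boundary.
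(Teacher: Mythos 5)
Your Part 1 is correct and is essentially the paper's own argument: the paper performs the identical rescaling with $\beta = 1/\gamma$, establishing that $\mathbf{w}$ solves $(P_{\lambda,\alpha})$ iff $\tfrac{1}{\beta}\mathbf{w}$ solves $(P_{\beta\lambda,\,\alpha/\beta})$, and then specializes $\beta$ exactly as you specialize $\gamma$. Your identity $l_k(\gamma\mathbf{s}, y, \alpha) = \gamma\, l_k(\mathbf{s}, y, \alpha/\gamma)$ is the same positive-homogeneity observation the paper uses when it pulls $\tfrac{1}{\beta}$ inside the $\max$ and the top-$k$ operator, so there is nothing to add there.

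The gap is in Part 2, and specifically in the repair you sketch at the end. You are right that the scaling argument only yields equivalence of $(P_{\lambda,0})$ with $(P_{\beta\lambda,0})$ for each fixed $\beta>0$, while Definition \ref{def:margin} demands a single affine map, so a limit does not close the argument; for what it is worth, the paper's own proof of this case is no more rigorous, stopping at ``$\beta\lambda$ arbitrarily small.'' But the direct verification you propose --- that the solution sets of $(P_{\lambda,0})$ and $(P_{0,0})$ coincide under an affine correspondence, leaning on the fact that $\mathbf{w}=0$ minimizes on the boundary --- would fail. For $\lambda>0$, the objective of $(P_{\lambda,0})$ is a sum of two nonnegative terms that both vanish at $\mathbf{w}=0$, the first of which is strictly positive everywhere else, so its solution set is exactly the singleton $\{0\}$. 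The solution set of $(P_{0,0})$ is the set of all $\mathbf{w}$ with zero unregularized loss, which contains every $\mathbf{w}$ whose $n$ columns are identical (all scores tie, so $(\mathbf{s}_{\backslash y})_{[k]} - s_y = 0$) --- a $d$-dimensional family. No map $\mathbf{w}\mapsto\gamma\mathbf{w}+\nu$ with $\gamma>0$ puts a singleton in bijection with such a set. A similar obstruction arises for $(P_{0,\alpha})$ with $\alpha>0$: there $\mathbf{w}=0$ incurs loss $\alpha$ rather than $0$, and a tied-score minimizer of $(P_{0,0})$ remains tied under any positive rescaling, so it never acquires the margin $\alpha$. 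So your instinct that the limiting step is the weak point is sound, but the proposed patch does not rescue it; Part 2 as stated cannot be obtained by this route without weakening the notion of equivalence (e.g.\ to equivalence of the one-parameter families of problems, or of decision boundaries of suitably chosen minimizers).
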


\begin{proof}
Let $\mathbf{w} \in \mathbb{R}^{d \times n}$. We introduce a constant $\beta > 0$. Then we can successively write:
\begin{equation}
\begin{split}
    &\mathbf{w} \text{ is a solution to } (P_{\lambda, \alpha}) \\
    &\iff \mathbf{w} \text{ is a solution to } \min\limits_{\mathbf{w} \in \mathbb{R}^{d \times n}} \dfrac{\lambda}{2} \| \mathbf{w} \|^2 + \dfrac{1}{N} \sum\limits_{i=1}^N l_k(\mathbf{w}^T \mathbf{x}_i, y_i, \alpha), \\
    &\iff \mathbf{w} \text{ is a solution to } \min\limits_{\mathbf{w} \in \mathbb{R}^{d \times n}} \dfrac{\lambda}{2} \| \mathbf{w} \|^2 + \dfrac{1}{N} \sum\limits_{i=1}^N \max \left\{ \left( \mathbf{w}_{\backslash y}^T \mathbf{x}_i + \alpha \mathbf{1} \right)_{[k]} - \mathbf{w}_{y}^T \mathbf{x}_i, 0 \right\}, \\
    &\iff \mathbf{w} \text{ is a solution to } \min\limits_{\mathbf{w} \in \mathbb{R}^{d \times n}} \dfrac{\lambda}{2 \beta} \| \mathbf{w} \|^2 + \dfrac{1}{N} \sum\limits_{i=1}^N \max \left\{ \left( \frac{1}{\beta} \mathbf{w}_{\backslash y}^T \mathbf{x}_i + \frac{\alpha}{\beta} \mathbf{1} \right)_{[k]} - \frac{1}{\beta} \mathbf{w}_{y}^T \mathbf{x}_i, 0 \right\}, \\
    &\iff \mathbf{w} \text{ is a solution to } \min\limits_{\mathbf{w} \in \mathbb{R}^{d \times n}} \dfrac{\beta \lambda}{2} \| \frac{1}{\beta} \mathbf{w} \|^2 + \dfrac{1}{N} \sum\limits_{i=1}^N \max \left\{ \left( \frac{1}{\beta} \mathbf{w}_{\backslash y}^T \mathbf{x}_i + \frac{\alpha}{\beta} \mathbf{1} \right)_{[k]} - \frac{1}{\beta} \mathbf{w}_{y}^T \mathbf{x}_i, 0 \right\}, \\
    &\iff \frac{1}{\beta} \mathbf{w} \text{ is a solution to } \min\limits_{\mathbf{w} \in \mathbb{R}^{d \times n}} \dfrac{\beta \lambda}{2} \| \mathbf{w} \|^2 + \dfrac{1}{N} \sum\limits_{i=1}^N \max \left\{ \left( \mathbf{w}_{\backslash y}^T \mathbf{x}_i + \frac{\alpha}{\beta} \mathbf{1} \right)_{[k]} - \mathbf{w}_{y}^T \mathbf{x}_i, 0 \right\}, \\
    &\iff \frac{1}{\beta} \mathbf{w} \text{ is a solution to } (P_{\beta \lambda, \frac{\alpha}{\beta}}).
\end{split}
\end{equation}
This holds for any $\beta > 0$. \\
If $\alpha > 0$ and $\lambda > 0$, we show equivalence with $(P_{\alpha \lambda, 1})$ by setting $\beta$ to $\alpha$ and with $(P_{1, \alpha \lambda})$ by setting $\beta$ to $\frac{1}{\lambda}$. \\
If $\alpha = 0$, then $\frac{\alpha}{\beta} = 0$ for any $\beta > 0$ and we can choose $\beta$ as small as needed to make $\beta \lambda$ arbitrarily small. \\
If $\lambda=0$, $\beta \lambda = 0$ for any $\beta > 0$ and we can choose $\beta$ as large as needed to make $\frac{\alpha}{\beta}$ arbitrarily small.

Note that we do not need any hypothesis on the norm $\|\cdot\|$, the result makes only use of the positive homogeneity property.
\end{proof}

\paragraph{Consequence On Deep Networks.} Proposition \ref{prop:margin} shows that for a deep network trained with $l_k$, one can fix the value of $\alpha$ to 1, and treat the quadratic regularization of the last fully connected layer as an independent hyper-parameter. By doing this rather than tuning $\alpha$, the loss keeps the same scale which may make it easier to find an appropriate learning rate.

When using the smooth loss, there is no direct equivalent to Proposition \ref{prop:margin} because the log-sum-exp function is not positively homogeneous. However one can consider that with a low enough temperature, the above insight can still be used in practice.

\subsubsection{Experiment on ImageNet}

In this section, we provide experiments to qualitatively assess the importance of the margin by running experiments with a margin of either 0 or 1. The following results are obtained on our validation set, and do not make use of multiple crops.

\paragraph{Top-1 Error.} As we have mentioned before, the case $(k, \tau, \alpha) = (1, 1, 0)$ corresponds exactly to Cross-Entropy. We compare this case against the same loss with a margin of 1: $(k, \tau, \alpha) = (1, 1, 1)$. We obtain the following results:
\begin{table}[H]
 \centering
 \begin{tabular}{c|c}
 Margin & Top-$1$ Accuracy (\%) \\ \hline
 0      & 71.03 \\
 1      & {\bf71.15}
 \end{tabular}
\caption{\em Influence of the margin parameter on top-$1$ performance.}
 \end{table}

\paragraph{Top-5 Error.} We now compare $(k, \tau, \alpha) = (5, 0.1, 0)$ and $(k, \tau, \alpha) = (5, 0.1, 1)$:
\begin{table}[H]
\centering
\begin{tabular}{c|c}
Margin & Top-$5$ Accuracy (\%) \\ \hline
0      & 89.12 \\
1      & {\bf89.45} \\
\end{tabular}
\caption{\em Influence of the margin parameter on top-$5$ performance.}
\end{table}

\subsection{Supplementary Details}
\label{app:exp}

In the main paper, we report the average of the scores on CIFAR-100 for clarity purposes. Here, we also detail the standard deviation of the scores for completeness.
\begin{table}[H]
\centering
\caption{\em Testing performance on CIFAR-100 with different levels of label noise. We indicate the mean and standard deviation (in parenthesis) for each score.}
\label{table:cifar100_noise_full}
\begin{tabular}{l|cc|cc}
Noise & \multicolumn{2}{c}{Top-$1$ Accuracy (\%)} & \multicolumn{2}{|c}{Top-$5$ Accuracy (\%)} \\
Level & CE                                        & $L_{5,1}$                                     & CE                 & $L_{5,1}$ \\ \hline %\\
0.0   & {\bf 76.68} (0.38)                        & 69.33 (0.27)                                  & {\bf 94.34} (0.09) & 94.29 (0.10)       \\
0.2   & 68.20  (0.50)                             & {\bf 71.30} (0.79)                            & 87.89 (0.08)       & {\bf 90.59} (0.08) \\
0.4   & 61.18  (0.97)                             & {\bf 70.02} (0.40)                            & 83.04 (0.38)       & {\bf 87.39} (0.23) \\
0.6   & 52.50  (0.27)                             & {\bf 67.97} (0.51)                            & 79.59 (0.36)       & {\bf 83.86} (0.39) \\
0.8   & 35.53  (0.79)                             & {\bf 55.85} (0.80)                            & 74.80 (0.15)       & {\bf 79.32} (0.25) \\
1.0   & 14.06  (0.13)                             & {\bf 15.28} (0.39)                            & 67.70 (0.16)       & {\bf 72.93} (0.25)
\end{tabular}
\end{table}

\end{document}